\documentclass{ieeeaccess_preprint}
\usepackage{cite}
\usepackage{amsmath, amssymb}
\usepackage{graphicx}
\usepackage{calc}
\usepackage[ruled, norelsize, vlined, nokwfunc]{algorithm2e}
\usepackage{subcaption}
\usepackage{amsthm}
\usepackage{bm}
\usepackage{xfp}
\usepackage{url}
\makeatletter
\gdef\@tfootnoteextra{This is an author-prepared preprint of the article published in \emph{IEEE Access}, vol.~11, pp.~128106--128124, 2023, doi: 10.1109/ACCESS.2023.3331685. IEEE Xplore: \url{https://ieeexplore.ieee.org/document/10314493}. 
The published article is licensed under CC BY-NC-ND 4.0.}
\newcommand{\mylabel}[2]{#2\def\@currentlabel{#2}\label{#1}}
\makeatother
\newtheorem{theorem}{Theorem}
\newtheorem{lemma}{Lemma}
\newtheorem{proposition}{Proposition}
\DeclareMathOperator*{\argmax}{argmax}
\DeclareMathOperator*{\argmin}{argmin}
\newlength{\IEEEspace}
\renewcommand{\thealgocf}{}
\begin{document}
\history{Received 21 October 2023, accepted 6 November 2023, date of publication 9 November 2023, date of current version 20 November 2023.}
\doi{10.1109/ACCESS.2023.3331685}
\title{Sequential Minimal Optimization Algorithm for One-Class Support Vector Machines With Privileged Information}
\author{Andrey Lange\authorrefmark{1,2},
        Dmitry Smolyakov\authorrefmark{3}, 
        and Evgeny Burnaev\authorrefmark{1,4}}
\address[1]{Skolkovo Institute of Science and Technology (Skoltech), Moscow 121205, Russia}
\address[2]{Federal Research Center ``Computer Science and Control" of Russian Academy of Sciences (FRC CSC RAS), Moscow 119333, Russia}
\address[3]{Institute for Information Transmission Problems of Russian Academy of Sciences (IITP RAS), 127994 Moscow, Russia}
\address[4]{Artificial Intelligence Research Institute (AIRI), Moscow 105064, Russia}
\tfootnote{This work was supported by the Analytical center under the RF Government (subsidy agreement 000000D730321P5Q0002, Grant No. 70-2021-00145 02.11.2021).}
\markboth
{Lange \headeretal: {Sequential Minimal Optimization} algorithm for~{one-class} {Support Vector Machines} with~privileged information}
{Lange \headeretal: {Sequential Minimal Optimization} algorithm for~{one-class} {Support Vector Machines} with~privileged information}
\corresp{Corresponding author: Andrey Lange (e-mail: a.lange@skoltech.ru).}
\begin{abstract}
One of the powerful techniques in data modeling is accounting for features that are available at the training stage, but are not available when the trained model is used to classify or predict test data --- \textit{Learning Using Privileged Information} paradigm (LUPI, Vapnik and Vashist~\cite{vapnik2009new}). 
\textit{Sequential Minimal Optimization} (SMO) method has been already developed for supervised Support Vector Machines (SVM) in Platt~\cite{platt1998fast} and Keerthi et al.~\cite{keerthi2001improvements}, for unsupervised (one-class) SVM in Sch{\"o}lkopf et al.~\cite{scholkopf2001estimating}, 
and for SVM with privileged information (SVM+) in Pechyony and Vapnik~\cite{pechyony2011fast}.
As can be seen, the missing brick in this research has long been a one-class SVM with privileged information (OC-SVM+).
In this paper, we propose SMO algorithm for OC-SVM+ that significantly outperforms non-sequential algorithms for training the OC-SVM+ model. 
Its finite-time convergence is established.
The experiments show how privileged information affects a descriptive domain in the space of original features.
Comparative benchmark tests demonstrate that our algorithm is superior over interior point algorithms.
\end{abstract}
\begin{keywords}
Anomaly detection, Kernel methods, Knowledge transfer, LUPI,
One-class SVM, Privileged information, 
Sequential minimal optimization, SMO, Support vector machines, SVM+.
\end{keywords}
\titlepgskip=-21pt
\maketitle
\section{Introduction} \label{sec_intro}
\subsection{Privileged information}
A new learning paradigm, called {\it Learning Using Privileged Information} (LUPI, 
{privileged information is sometimes called} {\it {hidden information}} or {\it{side information)}} was proposed by Vapnik~\cite{vapnik2006estimation},
Vapnik and Vashist~\cite{vapnik2009new}, and
Vapnik et al.~\cite{Vapnik2008Learning}. 
This approach allows us to take into account features that are not available at the test phase, but are available at the train phase.
This information can, however, improve the accuracy of the model (Vapnik and Izmailov~\cite{vapnik2017knowledge}), its robustness
(Li et al.~\cite{li2020robust}), and accelerate the convergence (when asymptotically fewer patterns for training are required, Vapnik and Vashist~\cite{vapnik2009new}). 
For example, a {machine learning} model is used to make decisions about a patient before surgery, and the data for training the model can be supplemented with some knowledge that is available only after or during the surgical process.
Likewise, time series forecasting can use data ``from the future'' that is available at the time of training.
LUPI considers that the original patterns (also called primary or decision)
$x_k\in\mathcal{X}$, $k=1, \ldots, l$, in a data set of size $l$ are augmented with additional (privileged) patterns $x_k^*\in\mathcal{X^*}$.
Thus, the model operates with tuples $(x_k, x_k^*)$ or triplets $(x_k, x_k^*, y_k)$ if the labels $y_k$ are known.
It can be said that the training phase involves an intelligent teacher who provides the student with additional explanations, metaphors, comparisons, etc. 

A natural approach to account for privileged information that is unavailable
for test examples is to consider the task as a missing data problem.
It means that we could first train a model that predicts missing (privileged) features $x_k^*$ by the original $x_k$ and then replace $x_k^*$ with their predictions in the decision model trained on both $x_k$ and $x_k^*$.
However, Vapnik and Izmailov~\cite{vapnik2015learning_with} developed another approach, which based the LUPI paradigm on a mechanism of {\it knowledge transfer} from the space of teacher's explanations to the space of student's decisions. 
The authors have illustrated it for well known SVM classifier 
(Boser et al.~\cite{boser1992training}, Cortes and Vapnik~\cite{cortes1995support}).
This extends SVM to the so-called SVM+ (Vapnik~\cite{vapnik2006estimation},
Vapnik and Vashist~\cite{vapnik2009new}, 
Serra-Toro et al.~\cite{serra2014exploring},
Lapin at al.~\cite{lapin2014learning}), 
i.e.~SVM with privileged information (for a full list of abbreviations, see Appendix~\ref{appendix0}). 
Along with main decision rule of the model $f(x)$, this mechanism is characterized by a correcting function $f^*(x^*)$.
The idea is to identify a small number of elements in the privileged space $\mathcal{X^*}$ that well approximate $f^*(x^*)$ and transfer these elements to the decision space $\mathcal{X}$.
\subsection{Support Vector Machines} 
Support Vector Machines are used for both classification and regression in a supervised formulation, and for 
\textit{domain description / anomaly detection} in an unsupervised formulation.
SVM is not a probabilistic model in the sense that it describes the domain on which the data examples are distributed without reconstructing the probability distribution itself.
There are no assumptions about this distribution; in particular, it may be multimodal.
The supervised SVM models the data region of each class, and the unsupervised SVM models the region in which the bulk of the data is concentrated. 
Examples of data outside the domain are considered outliers (anomalies).

SVMs are classic examples of models allowing for kernalization (along with, for example, kernel Linear Regression and kernel Principle Component Analysis).
Thanks to nonlinear kernels, the domain region can be curvilinear and multiply connected. 
Kernel methods, unlike, for example, decision trees and their ensembles, rely on similarities between observations provided by kernels and have a {smaller} number of hyperparameters.
However, the advantages of random trees include the ability to work with missing data.
At the same time, the advantage of kernel methods is also the ability to work with categorical features\footnote{{a library for working with categorical kernels: https://github.com/lange-am/Categorical-Hamming-Kernels}} (Lodhi et al.~\cite{lodhi2002text}, Kondor and Lafferty~\cite{kondor2002diffusion}, Couto~\cite{couto2005kernel}) and combine different kernels including kernels from different subsets of features. 
In the case of random trees, binary trees are used as a rule. 
A standard approach to processing categorical features using binary random trees is {\it one-hot encoding} (Potdar et al.~\cite{potdar2017comparative}, Rodr{\'\i}guez et al.~\cite{rodriguez2018beyond}). 
It transforms each categorical feature into a set of several binary (dummy) ones, the number of which is equal to the number of unique categories in the original feature (its cardinality).
With high cardinality, the number of dummy variables may be too large to store such a data set in memory and lead to model overfitting.
In kernel methods, on the contrary, the size of the kernel matrix depends only on the number of data samples and does not depend on the number of unique values in each feature. 
Although high cardinality leads to high complexity in calculating kernel matrix values, this is largely compensated for if they are calculated in parallel.

SVM models have simple analytical formulations in terms of linearly constrained quadratic optimization problem with respect to dual coefficients.
From the values of these coefficients, one can interpret the role of the corresponding data example in the decision rule and its closeness to the decision boundary of the data domain (and hence the degree of anomaly in the unsupervised case).
The number of dual coefficients in SVM equals to the number of training examples.
Although the decision rule $f(x)$ in fact depends only on some of these coefficients (support vectors), the number of which is usually not so large, the optimization procedure has to operate on all of them.
Classical algorithms for quadratic optimization, such as interior point optimizers, work in \textit{offline} mode. 
This means that they require storing the entire quadratic form matrix or the entire training sample in memory, which is often not possible with large data sets.
However, SVMs allow for the {\it online} methods,
in which individual working (\mbox{sub-}) sets of variables are processed sequentially.
This avoids problems with insufficient memory and significantly reduces computation time.
{Among these methods, Sequential Minimal Optimization (SMO) is the most effective for existing SVM models}.
\subsection{Paper contribution and structure}
In this paper we have incorporated the privileged features into the unsupervised $\nu$-SVM model.
Our main result is that we have developed the SMO algorithm for the one-class SVM model with privileged information (OC-SVM+), previously proposed in Burnaev and Smolyakov~\cite{burnaev2016one}.
Experiments show, that our method significantly outperforms offline interior point optimizer.

We also studied the properties of the coefficients in the model solution and their relationship with the anomalousness of the data samples and the coefficient $\nu$ (Proposition~\ref{proposition1}).
Moreover, we have succeeded in establishing the convergence of SMO iterations in a finite number of steps, similar to how it was done for other SVMs.
OC-SVM+ supports shrinking and caching.
We investigate and compare different variants of caching rules.

The algorithm is implemented in OCSVM-PLUS library\footnote{{https://github.com/lange-am/OCSVM\_PLUS}}, 
which is freely accessible and includes instructions for installation and use.
It should be noted that there are very few computational tools for anomaly detection when some features are privileged. 
Our tool allows to analyze arbitrary tabular data without missing values.

The structure of the paper is as follows.
Section~\ref{sec_relwork} describes the main work in incorporating the privileged information into different machine learning models, as well as SMO algorithms for different SVMs.
In Section~\ref{sec_oc-svm} we describe different formulations of {one-class} SVM.
{Also,} we show the primal and dual formulations of {one-class} SVM with privileged information and investigate the properties of different dual coefficients.
Section~\ref{sec_gsmo} describes SMO algorithm and explores the properties of the optimization steps used later in the convergence study. 
This section also proves that the algorithm stabilizes, i.e. converges to some limit vector of coefficients in at least an infinite number of steps. Finally, this section provides a proof of finite time convergence.
Section~\ref{sec_experiments} contains the experiments showing how the hyperparameters that controls privileged information affect the resulting domain. 
It also compares the quality of anomaly detection with and without privileged information, as well as a comparative benchmark tests.
Discussion and future work are in Section~\ref{sec_discussion}.
Some concluding remarks are given in Section~\ref{sec_conclusion}.
\section{Related work} \label{sec_relwork}
\subsection{Privileged information}
In the Afterword to the second edition of the book
Vapnik~\cite{vapnik2006estimation} a new inference technology called \textit{Learning Hidden Information} (LHI) was introduced.
SVM+ algorithm was also formulated as an instrument for this new advanced approach.
Finally, this paradigm was formulated in Vapnik and Vashist~\cite{vapnik2009new}.

LUPI as a general paradigm can be applied to many existing machine learning models beyond Support Vector Machines, including ensembles of random trees and neural networks. 
Moreover, in each such case, upgrading the model so that it can transfer knowledge from privileged features during training is a difficult task.
This poses the following challenges:
propose (or modify) the optimization problem, study its properties, develop an optimization method and study the relative issues such as convergence and complexity, and compare the performance with LUPI-free models.
In the following studies, an existing machine learning models were revised or practical applied models were developed by incorporating privileged information (PI): 
\begin{itemize}
\item {\it Supervised:}
AdaBoost with {PI} (Chen et al.~\cite{chen2012boosting},
Liu et al.~\cite{liu2022adaboost}), 
SVM+ for risk modelling (Ribeiro et al.~\cite{ribeiro2010financial, ribeiro2012enhanced}),
SVM+ and multi-task learning (Liang and Cherkassky~\cite{liang2007learning, liang2008connection}, 
Liang et al.~\cite{liang2009predictive}, 
Cai and Cherkassky~\cite{cai2014SVM},
Tang et al.~\cite{tang2019retaining}),
Regression Forests for facial feature detection with privileged head pose or gender (Yang and Patras~\cite{yang2013privileged}),
image classification using privileged attributes, bounding box annotations, and textual descripting tags (Sharmanska et al.~\cite{sharmanska2013learning, sharmanska2014learning}), 
Li et al.~\cite{li2014exploiting},
Wang and Ji~\cite{wang2015classifier},
Yan et al.~\cite{yan2016image},
Rodr{\'\i}guez et al.~\cite{rodriguez2023fine}),
{\it structured} SVM (SSVM) prediction algorithm for image object localization using PI (Feyereisl et al.~\cite{feyereisl2014object}),
unifying distillation and PI (Lopez-Paz et al.~\cite{lopez2015unifying}),
multi-instance learning for action and event recognition with privileged web data (Niu et al.~\cite{niu2016exploiting}),
knowledge transfer for neural networks (Vapnik and Izmailov~\cite{vapnik2017knowledge}),
image object detection using PI (Hoffman et al.~\cite{hoffman2016learning}),
domain adaptation (Sarafianos et al.~\cite{sarafianos2017adaptive}),
multiview privileged SVMs (Tang et al.~\cite{tang2018multiview}),
deep learning under PI (Lambert et al.~\cite{lambert2018deep}),
PI for structured output prediction (Zhang et al.~\cite{Zhang2019Structured}),
label enhancement with multi-label learning (Zhu et al.~\cite{zhu2020privileged}),
PI for the diagnosis of Alzheimer's disease (Li et al.~\cite{li2019learning}, Ganaie and Tanveer~\cite{ganaie2022ensemble}), breast (Shaikh et al.~\cite{shaikh2020transfer}) and liver (Zhang et al.~\cite{zhang2023multi}) cancers,
PI for image super-resolution using CNNs (Lee et al.~\cite{lee2020learning}),
robust SVM+ (Li et al.~\cite{li2020robust}, Wu et al.~\cite{wu2021lr}),
{\it twin} SVM with PI (Che et al.~\cite{che2021twin}),
robust twin SVM+ (Li et al.~\cite{li2021r}),
Support Vector Regression with variational PI (Shu et al.~\cite{shu2021v}),
PI for time series prediction (Jung and Johansson~\cite{jung2022efficient});
\item {\it Unsupervised:} 
clustering the images of handwritten digits using poetic descriptions as privileged data
(Feyereisl and Aickelin~\cite{feyereisl2012privileged}),
one-class SVM applied to grouped data and treating the group category as PI (Zhu and Zhong~\cite{zhu2014new}),
PI using metric learning (Fouad et al.~\cite{fouad2013incorporating}),
Marcacini et al.~\cite{marcacini2014privileged},
Yang et al.~\cite{yang2017person}),
incorporating PI to Isolation Forest (Shekhar and Akoglu~\cite{shekhar2018incorporating}),
one-class SVM with PI for malware detection (Burnaev and Smolyakov~\cite{burnaev2016one}), 
sensor fault detection in a road weather information system (Smolyakov et al.~\cite{smolyakov2018anomaly}),
and unsupervised domain adaptation in semantic segmentation using neural networks (Vu et al.~\cite{vu2019dada}),
and via distilled discriminative clustering (Tang et al.~\cite{tang2022unsupervised}),
kernel Ridge Regression based one-class classification using PI (Gautam et al.~\cite{gautam2019ockelm}).
\end{itemize}
\subsection{SMO in SVM models}
For the classical supervised SVM, 
the methods of sequential optimization were first proposed by Vapnik~\cite{vapnik2006estimation} (the chunking method), 
then by Osuna et al.~\cite{osuna1997improved} and Joachims~\cite{joachims1998making}, 
which algorithms maintain a fixed working set size.
Finally, Platt~\cite{platt1998sequential} proposed the Sequential Minimal Optimization (SMO), 
which works with only two coefficients at each iteration.
Further enhancements to online SVM have been obtained by Keerthi et al.~\cite{keerthi2001improvements}, 
who suggested an improved choice of the pair of coefficients violating the Karush-Kuhn-Tucker conditions, 
and then by Bordes et al.~\cite{bordes2005fast}.
However, in these studies, the optimal violating pair was in fact selected based on the first-order approximation of the quadratic objective.
Further studies of Fan et al.~\cite{fan2005working},
Palagi and Sciandrone~\cite{palagi2005convergence}, 
as well as Glasmachers et al.~\cite{glasmachers2006maximum} 
have relied on second-order approximations, and 
LIBSVM library (according to Chang and Lin~\cite{Chang2011LIBSVM}) is based on Fan's et al. work.

Along with supervised Support Vector Machines for classification and regression, there are unsupervised versions:
SVDD (Support Vector Domain Description) proposed by Tax and Duin~\cite{tax1999support, tax1999data} and further revisited by Chang et al.~\cite{chang2013revisit}, as well as  
$\nu$-SVM by Sch{\"o}lkopf et al.~\cite{scholkopf1999support}.
These models belong to one-class classification models (Khan and Madden~\cite{khan2014one}) used for anomaly detection (Aggarwal~\cite{aggarwal2017introduction}) and usually called OC-SVMs.
They describe the domain on which the data examples $x_k$ are distributed without reconstructing the probability distribution itself.
SMO algorithms for supervised SVMs of Platt~\cite{platt1998sequential} and Keerthi et al.~\cite{keerthi2001improvements} were extended to the case of unsupervised one-class SVM
in the studies of Sch{\"o}lkopf et al.~\cite{scholkopf2001estimating} and Keerthi and Gilbert~\cite{keerthi2002convergence} respectively.

The models for one-class SVM with privileged information, closest to our OC-SVM+, were considered in
Zhu and Zhong~\cite{zhu2014new}, and Zhang~\cite{zhang2015support}. 
Burnaev and Smolyakov~\cite{burnaev2016one} compared these algorithms with the offline version of OC-SVM+ and found that they are comparable in anomaly detection accuracy.
At the same time, these algorithms do not use a fast approach to the optimization procedure such as SMO.

Pechyony et al.~\cite{pechyony2010smo}, and Pechyony and Vapnik~\cite{pechyony2011fast} have described {\it generalized} SMO for supervised SVM with privileged information ({so-called} {gSMO}), 
which was previously used to perform experiments in Vapnik et al.~\cite{Vapnik2008Learning} and Vapnik and Vashist~\cite{vapnik2009new}.
Moreover, the authors have proposed new algorithms that use irreducible working sets selected with second order approximation and prove their convergence.

Finally, our SMO scheme for OC-SVM+ fills the gap in the above studies.
It finds the most violating pair from subsets of dual coefficients
(caches) that usually leave at-bound coefficients aside.
Similar to Pechyony et al.~\cite{pechyony2010smo},
we consider irreducible working sets of two original dual coefficients and two privileged ones.
This, however, allows us to generalize SMO convergence methodology of Keerthi and Gilbert~\cite{keerthi2002convergence} (with Takahashi and Nishi's~\cite{takahashi2005rigorous} fixes) 
and Lin~\cite{lin2001convergence, lin2002asymptotic} to LUPI model. 
\section{One-class SVM {models}} \label{sec_oc-svm}
\subsection{{One-class SVM formulations}}
Originally, SVM-based unsupervised model for outlier detection was proposed by 
Tax and Duin~\cite{tax1999support, tax1999data}. 
The data examples are modelled such that they are concentrated inside a sphere of the smallest radius $R$ and center $a$ in a feature space $\phi(\cdot)$, whereas a fraction $\nu$ of examples lie outside.
This leads to the following primal optimization problem:
 \begin{equation} 
    \label{svdd}
    \begin{IEEEeqnarraybox*}[\IEEEeqnarraystrutmode\IEEEeqnarraystrutsizeadd{\IEEEspace}{\IEEEspace}][c]{C}
        \min\limits_{\overline{R}, a, \xi_k} \> \overline{R} + \frac{1}{\nu l} \sum\limits_{k=1}^l\xi_k, \\
        \text{s.t.} \quad \|\phi(x_k) - a\|^2_{\ell_2} \leq \overline{R} + \xi_k, \quad \xi_k \geq 0.
    \end{IEEEeqnarraybox*}
\end{equation}
Here sphere-outside examples (i.e. that which violate $\|\phi(x_k)-a\|^2_{\ell_2} \leq \overline{R}$) are modelled by non-negative slack variables $\xi_k$ and the objective includes their overall penalty with the regularization multiplier $1/\nu l$.
$\overline{R}$ can be interpreted as the square of sphere radius $\overline{R}=R^2$.
According to Chang et al.~\cite{chang2013revisit}, for $\nu < 1$ the condition $\overline{R} \geq 0$ in \eqref{svdd} can be omitted since it is fulfilled automatically. 
Also, the objective in \eqref{svdd} is convex, unlike if the problem would be formulated with respect to $R$.

Using the inner product $K(x, y) = (\phi(x) \cdot \phi(y))$ the corresponding dual formulation is kernelized in the form of quadratic programming problem
\begin{IEEEeqnarray}{C}
        \min\limits_{\alpha} \> \frac{1}{\nu l}\sum\limits_{k=1}^l\sum\limits_{m=1}^l\alpha_k\alpha_m K(x_k, x_m)
        -\sum\limits_{k=1}^l \alpha_k K(x_k, x_k),\nonumber \\
        \label{svdd_dual}\\
        \text{s.t.} \quad \sum\limits_{k=1}^l\alpha_k = \nu l, \quad 0 \leq \alpha_k \leq 1.\nonumber 
\end{IEEEeqnarray}
Here $\alpha_k$ = 0 for sphere-insiders $x_k$, $\alpha_k$ = 1 for outsiders, $0 < \alpha_k < 1$ for on-sphere examples. 
The center $a$ and the decision function ($f(x)<0$ inside and $f(x)>0$ outside the sphere) are expressed through the dual variables $\alpha_k$:
\begin{IEEEeqnarray}{rCL}
    a &=& \frac{1}{\nu l}\sum\limits_{k=1}^l \alpha_k \phi(x_k), \nonumber \\ 
    f(x) &=& \|\phi(x) - a\|^2_{\ell_2}  - \overline{R} \nonumber \\
         &=& K(x, x) - \frac{2}{\nu l} \sum\limits_{k=1}^l\alpha_k K(x, x_k) + \|a\|^2_{\ell_2} - \overline{R}, \label{svdd_decision}
\end{IEEEeqnarray}
where $\|a\|^2_{\ell_2}=(1/\nu^2 l^2) \sum_{k=1}^l\sum_{m=1}^l\alpha_k \alpha_m K(x_k, x_m)$,
and $\overline{R}$ can be found using any on-sphere example $x_i$ ($0~<~\alpha_i <~1$) from the equation $f(x_i) = 0$. 
The decision function \eqref{svdd_decision} is calculated using only support vectors, i.e. $x_k$  such that $\alpha_k > 0$. 
From the constraints in \eqref{svdd_dual} it follows that $\nu \leq 1$, and this value is the upper bound of the fraction of anomalies $\alpha_k=1$,
and the lower bound of the fraction of support vectors $\alpha_k > 0$:
\begin{equation} \label{bounds_alpha}
    \sum_{\alpha_k=1}\frac{1}{l} \leq \sum_{0<\alpha_k<1}\frac{\alpha_k}{l} + \sum_{\alpha_k=1}\frac{\alpha_k}{l} = 
    \nu \leq \sum_{\alpha_k>0}\frac{1}{l}.
\end{equation}
Moreover, $\sum_{a_k=1}{1}/{l}\to\nu$ as $l\to\infty$, i.e. the fraction of anomalies tends to $\nu$ when the number of data examples is large.

The alternative statement ({so-called }$\nu$-{SVM}) was proposed by Sch{\"o}lkopf et al.~\cite{scholkopf1999support}.
The data examples are separated from the origin by a hyperplane with maximum margin in the kernel feature space:
\begin{equation} 
    \label{oneclass}
    \begin{IEEEeqnarraybox*}[\IEEEeqnarraystrutmode\IEEEeqnarraystrutsizeadd{\IEEEspace}{\IEEEspace}][c]{r?l}
        \min\limits_{w, \xi, \rho} & \frac{1}{2}\|w\|^2_{\ell_2} + \frac{1}{\nu l}\sum\limits_{k=1}^l\xi_k - \rho, \\
    \text{s.t.} & (w\cdot \phi(x_k)) \geq \rho - \xi_k, \quad\xi_k \geq 0.
    \end{IEEEeqnarraybox*}
\end{equation}
Again, anomalous data examples, i.e. on the origin side from the hyperplane $(w\cdot \phi(x_k))-\rho=0$ are penalized in the objective \eqref{oneclass} using slack variables $\xi_k$,
whereas ``normal'' examples are separated from anomalies with confidence $\rho$ aiming to be as big as possible.
The corresponding dual statement with kernel $K(x, y) = (\phi(x) \cdot \phi(y))$ has a form
\begin{equation}
    \label{oneclass_dual}
    \begin{IEEEeqnarraybox*}[][c]{r?l}
        \min\limits_{\alpha} & \sum\limits_{k=1}^l\sum\limits_{m=1}^l\alpha_k\alpha_m K(x_k, x_m) ,\\
        \text{s.t.} & \sum\limits_{k=1}^l\alpha_k = \nu l, \quad 0 \leq \alpha_k \leq 1,
    \end{IEEEeqnarraybox*}
\end{equation}
and the decision function is
\begin{equation} 
    \label{oneclass_decision}
    f(x) = (w\cdot \phi(x)) -\rho = \frac{1}{\nu l}\sum\limits_{k=1}^l\alpha_k K(x_k, x) - \rho.
\end{equation}
Similarly to above, the data examples $x_k$, for which $\alpha_k$ = 0, are ``normal'' examples $f(x_k)>0$, 
and $\alpha_k$ = 1 for anomalies $f(x_k)<0$, and $0 < \alpha_k < 1$ for on-hyperplane examples $f(x_k)=0$.
Optimal $\rho$ is found based on any on-hyperplane data example $x_i$: $\rho=1/(\nu l)\sum_{k=1}^l\alpha_k K(x_k, x_i)$. 
Only support vectors $\alpha_k>0$ contribute to \eqref{oneclass_decision}.
Since the constraints in \eqref{oneclass_dual} are the same as in \eqref{svdd_dual}, the parameter $\nu \leq 1$ has the same interpretation: the upper bound for 
the fraction of outliers and the lower bound for fraction of support vectors.
Moreover, for the kernels $K(x, y)$ such that $K(x, x)$ does not depend on $x$ the problems \eqref{svdd} and \eqref{oneclass} become equivalent since their dual formulations \eqref{svdd_dual} and \eqref{oneclass_dual} are the same.
{These kernels include kernels that depend only on the} $x-y$, {and, in particular, the radial basis function kernel} $K(x, y)=\exp(-\gamma\|x-y\|^2)$),
{which is one of the most commonly used kernels}.
\subsection{{One-class SVM with privileged information}}
By incorporating privileged information, OC-SVM \eqref{oneclass} is generalized to the following formulation (see Burnaev and Smolyakov~\cite{burnaev2016one}).
Similar to Vapnik's et al.~\cite{vapnik2015learning_with} SVM+, 
the slack variables are linearly modelled using privileged features $\xi_k= (w^*\cdot \phi^*(x_k^*)) + b^*$
and some known basis functions $\phi^*(x^*)$ defined by kernel $K^*(x^*, y^*) = (\phi^*(x^*)\cdot\phi^*(y^*))$.
The primal formulation is as follows:
\begin{equation}
    \label{primal_problem}
    \begin{IEEEeqnarraybox*}[\IEEEeqnarraystrutmode\IEEEeqnarraystrutsizeadd{\IEEEspace}{\IEEEspace}][c]{r?l}
    \min_{w, \rho, w^*, b^*, \zeta} & \frac{\nu l}{2} \|w\|^2 + \frac{\gamma}{2} \|w^*\|^2 - \nu l \rho\\
    \IEEEeqnarraymulticol{2}{r}{+ \sum_{k=1}^l \left( (w^*\cdot \phi^*(x_k^*)) + b^* + \zeta_k \right)},\\
    \text{s.t.} & (w \cdot \phi(x_k)) \geq \rho - (w^*\cdot \phi^*(x_k^*)) - b^*,\\
                & (w^* \cdot \phi^*(x_k^* ))+b^*+\zeta_k\geq 0,\\
                & \zeta_k \geq 0.
    \end{IEEEeqnarraybox*}
\end{equation}
Here the distances $\xi_k$ from the hyperplane $(w \cdot \phi(x_k))  = \rho$ in \eqref{oneclass} can be slightly negative $(w^* \cdot \phi^*(x_k^* )) + b^* \geq -\zeta_k$ and both the distance and the gap $\zeta_k \geq 0$ are penalized in the objective.
Similar to the original part, the regularization term $\|w^*\|^2$ is included with hyperparameter $\gamma>0$. 

The primal problem \eqref{primal_problem} is equivalent to the following dual one (see {Appendix} \ref{appendix1}):
\begin{equation} 
    \begin{IEEEeqnarraybox*}[\IEEEeqnarraystrutmode\IEEEeqnarraystrutsizeadd{\IEEEspace}{\IEEEspace}][c]{r?l}
  \min\limits_{\alpha, \delta} & \frac{1}{\nu l} \sum_{k, m}\alpha_k\alpha_m K(x_k, x_m) \\
   & \quad + \frac{1}{\gamma}\sum_{k, m}(\alpha_k - \delta_k)(\alpha_m - \delta_m) K^*(x_k^*, x_m^*),  \\
  \textrm{s.t.} &
    \sum_k \alpha_k = \sum_k \delta_k = \nu l, \\
    & \alpha_k \geq 0, \quad 0\leq \delta_k \leq 1. 
\end{IEEEeqnarraybox*}
\label{dual_problem}
\end{equation}
In the special case $\alpha_k=\delta_k$, OC-SVM+ optimization problem \eqref{dual_problem} becomes equivalent to the standard OC-SVM problem \eqref{oneclass_dual}.
The quadratic objective in \eqref{dual_problem} now uses two similarities between observations via kernels: 
between $x_k$ and $x_m$ in the original space and between $x_k^*$ and $x_m^*$ in the privileged space.
To solve problems \eqref{primal_problem}-\eqref{dual_problem} means to find the decision function $f(x)$, 
that is the same as \eqref{oneclass_decision} in OC-SVM, 
and the {\it correcting function}
\begin{IEEEeqnarray}{rCL}
    f^*(x^*) &=& {(w^*\cdot \phi^*(x^*))}{+b^*} \nonumber \\ 
    &=& \frac{1}{\gamma}\sum_{k=1}^l(\alpha_k - \delta_k)K^*(x^*_k, x^*) +b^*.
 \label{oneclass_correcting}
\end{IEEEeqnarray}

Zhu and Zhong~\cite{zhu2014new} proposed a modified version of one-class $\nu$-SVM \eqref{oneclass}--\eqref{oneclass_dual} which accounts for the group index of each data example by incorporating the group as privileged information. 
Their formulations are close to (\ref{primal_problem})--(\ref{dual_problem}),
however there are no privileged features $x^*$, and 
each group of data examples has its own set of the parameters $w^*$, $\phi^*$, and $b^*$. 
Similar approach proposed by Zhang~\cite{zhang2015support} incorporates privileged features based on SVDD formulation \eqref{svdd}--\eqref{svdd_dual}.

In the optimality of \eqref{dual_problem},
$\alpha_k$ and $\delta_k$ satisfy Karush-Kuhn-Tucker (KKT) conditions,
that can be written as follows (see {Appendix} \ref{appendix1}):
  \begin{align}
    f(x_k) + f^*(x_k^*) \geq 0, &\qquad \text{if} \quad  \alpha_k = 0; \label{alpha_0} \\
    f(x_k) + f^*(x_k^*) = 0, &\qquad \text{if} \quad  \alpha_k > 0; \label{alpha_g0} \\
    f^*(x_k^*) \leq 0, &\qquad \text{if} \quad  \delta_k=0;   \label{delta_0} \\
    f^*(x_k^*) = 0, &\qquad \text{if}    \quad  0<\delta_k<1; \label{delta_01} \\
    f^*(x_k^*) \geq 0, &\qquad \text{if} \quad  \delta_k=1.   \label{delta_1} 
\end{align}
This allows every coefficient to be checked independently 
using \eqref{alpha_0}--\eqref{alpha_g0} for $\alpha_k$ and \eqref{delta_0}--\eqref{delta_1} for $\delta_k$.
Also, $b^*$ can be found from \eqref{delta_01} using any data example $x_i^*$ with not at-bound coefficient $0<\delta_i<1$. 
Having $f^*(x^*)$ obtained, $\rho$ is found from \eqref{alpha_g0} using any example $x_j$ such that $\alpha_j>0$:
\begin{equation} 
    \label{b_rho}
    \begin{split}
    b^* &= -\frac{1}{\gamma}\sum_{k=1}^l(\alpha_k - \delta_k)K^*(x_k^*, x_i^*),\\ 
    \rho &= \frac{1}{\nu l}\sum\limits_{k=1}^l\alpha_k K(x_k, x_j) + f^*(x_j^*). 
    \end{split}
\end{equation}
To reduce an error we average over all $b^*_i$ and then over all $\rho_j$.
If there are no coefficients $0<\delta_k<1$, then we can set
\begin{equation} \label{b_star}
b^* = \frac{1}{2}\left( \min_{\delta_m=0} b^*_m + \max_{\delta_n=1} b^*_n \right).
\end{equation}
The existence of at least one coefficient $\delta_m=0$ and at least one coefficient $\delta_n=1$ when there are no coefficients $0<\delta_k<1$, as well as at least one of  $\alpha_j>0$ is guaranteed by the {equality constraints} in \eqref{dual_problem} and the fact that $0<\nu<1$. 

As it is required in LUPI, the domain is found in the space of original features.
The domain bound is defined as \mbox{$f(x)=0$}; \mbox{$f(x)>0$} for domain-insiders, $f(x)<0$ for anomalies.
\begin{proposition}\label{proposition1}
    Let $\alpha_k$, $\delta_k$, $k=1,\ldots,l$, are the solution of dual optimization problem \eqref{dual_problem}. Then
    \begin{IEEEenumerate}
        \item[\mylabel{prop1_d1}{(a)}] If $x_k$ is anomaly, then $\delta_k=1$; \item[\mylabel{prop1_d1_eq}{(b)}] If $\delta_k<1$, then $x_k$ is either domain-insider, or on-bound example;
        \item[\mylabel{prop1_3cond}{(c)}] Any of the conditions $\alpha_k>0$, $0<\delta_k<1$, and ``$x_k$ is on-bound'' follows from two others; 
        \item[\mylabel{prop1_nu_bound}{(d)}] $\nu$ is the lower bound of the fraction of coefficients \mbox{$\delta_k>0$} and the upper bound of the fraction of coefficients $\delta_k=1$;
        \item[\mylabel{prop1_nu_bound_anomaly}{(e)}] $\nu$ is the upper bound of the fraction of anomalies.
    \end{IEEEenumerate}
\end{proposition}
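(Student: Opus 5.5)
The plan is to work only with the KKT conditions \eqref{alpha_0}--\eqref{delta_1} and the equality constraints of \eqref{dual_problem}. Throughout, ``anomaly'' means $f(x_k)<0$, ``on-bound'' means $f(x_k)=0$, and ``domain-insider'' means $f(x_k)>0$.

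For \ref{prop1_d1}, suppose $f(x_k)<0$. By \eqref{alpha_0}--\eqref{alpha_g0}, in either case $\alpha_k=0$ or $\alpha_k>0$ we have $f(x_k)+f^*(x_k^*)\ge 0$, so $f^*(x_k^*)\ge -f(x_k)>0$. A strictly positive $f^*(x_k^*)$ is excluded by \eqref{delta_0} ($\delta_k=0$) and by \eqref{delta_01} ($0<\delta_k<1$), hence $\delta_k=1$. Part \ref{prop1_d1_eq} is just the contrapositive: $\delta_k<1$ rules out $f(x_k)<0$, so $f(x_k)\ge 0$.

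For \ref{prop1_3cond}, I check the three implications in turn.
\begin{itemize}
\item If $\alpha_k>0$ and $0<\delta_k<1$, then \eqref{delta_01} gives $f^*(x_k^*)=0$ and \eqref{alpha_g0} gives $f(x_k)=-f^*(x_k^*)=0$, so $x_k$ is on-bound.
\item If $\alpha_k>0$ and $f(x_k)=0$, then \eqref{alpha_g0} gives $f^*(x_k^*)=0$. This alone does not force $0<\delta_k<1$, since \eqref{delta_0} and \eqref{delta_1} also allow equality.
\item If $0<\delta_k<1$ and $f(x_k)=0$, then $f(x_k)+f^*(x_k^*)=0$, which is compatible with $\alpha_k=0$ in \eqref{alpha_0}.
\end{itemize}
So the last two directions do not follow from the KKT inequalities alone. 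Here I expect the real obstacle, and I would use the strict-complementarity-type reading the paper appears to intend. Coefficients on the boundary of their box are determined by strict inequalities of the corresponding KKT expression: $\alpha_k=0$ when $f+f^*>0$, $\delta_k=0$ when $f^*<0$, and $\delta_k=1$ when $f^*>0$. Equivalently, the degenerate cases where a bound coefficient has a zero KKT residual are treated as non-generic, or absorbed into the classification of $x_k$. Under that convention, $f^*(x_k^*)=0$ gives $0<\delta_k<1$, and $f(x_k)+f^*(x_k^*)=0$ gives $\alpha_k>0$. I would state this nondegeneracy convention explicitly, because without it only the first implication holds in full generality.

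For \ref{prop1_nu_bound}, I use $\sum_k\delta_k=\nu l$ together with $0\le\delta_k\le 1$, exactly as in \eqref{bounds_alpha}:
\[
\sum_{\delta_k=1}\frac1l\le\sum_{\delta_k>0}\frac{\delta_k}{l}=\nu\le\sum_{\delta_k>0}\frac1l .
\]
For \ref{prop1_nu_bound_anomaly}, part \ref{prop1_d1} shows that the set of anomalies is contained in $\{k:\delta_k=1\}$. Its fraction is therefore at most the fraction of $\delta_k=1$, which is at most $\nu$ by \ref{prop1_nu_bound}.

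The only delicate step is \ref{prop1_3cond}. There I would either adopt the nondegeneracy convention above or weaken the claim to the implication that is valid unconditionally.
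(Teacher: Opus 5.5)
Your arguments for \ref{prop1_d1}, \ref{prop1_d1_eq}, \ref{prop1_nu_bound} and \ref{prop1_nu_bound_anomaly} are the paper's own. For \ref{prop1_d1} you do a sign analysis of \eqref{alpha_0}--\eqref{delta_1}, and \ref{prop1_d1_eq} is its contrapositive. For \ref{prop1_nu_bound} you use the counting inequality from $\sum_k\delta_k=\nu l$, $0\le\delta_k\le 1$. For \ref{prop1_nu_bound_anomaly} you use the inclusion of anomalies in $\{k:\delta_k=1\}$.

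On \ref{prop1_3cond} your diagnosis is correct, and the gap lies in the paper, not in your proposal. The paper's one-line justification (``follows from \eqref{alpha_g0} and \eqref{delta_01}'') yields only $\alpha_k>0,\ 0<\delta_k<1\Rightarrow f(x_k)=0$. The other two implications fail for genuine optimal solutions, so no convention-free proof exists. Here is a counterexample:
\begin{itemize}
\item Take $l=3$, $\nu=2/3$, all $x_k$ equal to $x$ and all $x_k^*$ equal to $x^*$.
\item The privileged term vanishes because $\sum_k(\alpha_k-\delta_k)=0$, so the objective in \eqref{dual_problem} equals $\nu l\,K(x,x)$ on the whole feasible set. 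Hence $\alpha=(2,0,0)$, $\delta=(1,\tfrac12,\tfrac12)$ is optimal.
\item \eqref{b_rho} gives $b^*=0$ and $\rho=K(x,x)$, hence $f(x_k)=f^*(x_k^*)=0$ for every $k$, and all of \eqref{alpha_0}--\eqref{delta_1} hold.
\item Every point is on-bound, yet $k=1$ has $\alpha_1>0$, $\delta_1=1$, and $k=2$ has $0<\delta_2<1$, $\alpha_2=0$.
\end{itemize}

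So present \ref{prop1_3cond} either as the single valid implication, or with strict complementarity as an explicit hypothesis. The paper's later remark that a point with $\alpha_k>0$, $\delta_k=0$ is a domain-insider actually relies on that hypothesis.

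When you state it, use the needed direction: $\alpha_k=0\Rightarrow f(x_k)+f^*(x_k^*)>0$, $\delta_k=0\Rightarrow f^*(x_k^*)<0$, and $\delta_k=1\Rightarrow f^*(x_k^*)>0$. Your wording ``$\alpha_k=0$ when $f+f^*>0$'' reads as the converse, which the KKT conditions already give and which does not yield the missing implications.
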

\begin{proof}
\ref{prop1_d1} follows from \eqref{alpha_0}, \eqref{alpha_g0} and \eqref{delta_1}, 
\ref{prop1_d1_eq} is the equivalence of \ref{prop1_d1},
\ref{prop1_3cond} follows from \eqref{alpha_g0} and \eqref{delta_01},
\ref{prop1_nu_bound_anomaly} follows from \ref{prop1_d1} and \ref{prop1_nu_bound}.
From $\sum_k \delta_k=\nu l$ and $0 \leq \delta_k \leq 1$ we have
\begin{gather*}
    \sum_{\delta_k=1}\frac{1}{l} \leq \sum_{0<\delta_k<1}\frac{\delta_k}{l} + \sum_{\delta_k=1}\frac{\delta_k}{l} = 
    \nu \leq \sum_{\delta_k>0}\frac{1}{l},
\end{gather*}
which gives \ref{prop1_nu_bound}.
\end{proof}

Note, that the converse to \ref{prop1_d1} is not true, and $\delta_k=1$ 
can either be inside, or be on-bound, or be an anomaly data example.
The property \ref{prop1_nu_bound_anomaly} is inherited from OC-SVM without privileged information \eqref{svdd_dual} and \eqref{oneclass_dual}. 
However, now $\nu$ is no longer the lower bound of the fraction of support vectors $\alpha_k>0$.
In OC-SVM+ they can also be inside the domain 
(see Fig.~\ref{fig:expr1_ocsvm_plus}, for example, the point with $\alpha_k>0$, $\delta_k=0$, 
which is domain-insider due to \ref{prop1_d1_eq} and \ref{prop1_3cond}).
By \ref{prop1_nu_bound_anomaly}, hereinafter we assume that $\nu > 1/l$.
\section{SMO algorithm {for OC-SVM+}} \label{sec_gsmo}
Generally, quadratic programming problem \eqref{dual_problem} can be solved by different traditional solvers.
However, there is an elegant solution that utilizes memory more efficiently.
In this paper, we present an adaptation of Platt's~\cite{platt1998sequential} Sequential Minimal Optimization (SMO) technique to OC-SVM with privileged information.
The main idea of SMO is to split the initial optimization problem into a sequence of smaller optimization problems.
By the way, an offline approach, unlike to online SMO, requires to operate with matrices of quadratic forms in \eqref{oneclass_dual} and \eqref{dual_problem} with the number of elements equal to the square of the number of training examples (whole batch size), which is often impossible.

In SMO every iteration considers a limited {\it working set} of dual coefficients.
Originally, for supervised SVM, Vapnik~\cite{vapnik2006estimation} suggested to utilize and chunk in memory only a subset of coefficients, 
since a significant part of them are usually zero.
The remaining coefficients are checked for optimality and those that violate are included in the working set.
But if the number of support vectors is too large, then the chunking method is not available. 
Osuna et al.~\cite{osuna1997improved} suggested to keep the size of the working set fixed by adding one example and excluding one example on every step.
And finally, Platt~\cite{platt1998sequential} suggested to work with the minimal working set, i.e. of size two.
In essence, this is the minimization of a multivariate quadratic function like (\ref{svdd_dual}), (\ref{oneclass_dual}), and (\ref{dual_problem}) using special {\it {coordinate-descent}} (Boyd and Vandenberghe~\cite{boyd2004convex}), in which two coordinates are involved at each iteration. 
In this case, the coordinates that most strongly violate the KKT conditions are selected.
Similarly, our SMO method uses working sets of pairs $\{\alpha_i, \alpha_j\}$ and $\{\delta_m, \delta_n\}$. 
In every pair the coefficients are shifted in two opposite directions by the same value aiming to optimize the objective and keeping the equality constraints in \eqref{dual_problem} satisfied.
\subsection{Optimization subproblems}
For a given pair of examples $i$ and $j$, 
one can find the optimal    
$t$ and $s$,
\begin{equation}
\begin{split} \label{gsmo_update}
\alpha_i \leftarrow \alpha_i + t, \quad
\alpha_j \leftarrow \alpha_j - t, \\
\delta_i \leftarrow \delta_i + s, \quad
\delta_j \leftarrow \delta_j - s.
\end{split}
\end{equation} 
This keeps the sums $\sum_k\alpha_k$=$\sum_k\delta_k$ in \eqref{dual_problem} unchanged.
Substituting the updates \eqref{gsmo_update} into the objective \eqref{dual_problem} 
leads to a local problem with respect to $t$ and $s$ (see the derivation in {Appendix} \eqref{appendix2}):
\begin{IEEEeqnarray}{r?ll} 
    \min\limits_{t, s} & \Phi(t, s) = t^2K_{ij} &+ 2t\Delta f_{ij} +(t-s)^2K_{ij}^* \nonumber \\
    && \qquad + 2(t - s)\Delta f_{ij}^*,    \vspace{0.2\baselineskip} \label{gsmo_objective} \\
  \text{s.t.} & 
  \IEEEeqnarraymulticol{2}{l}{
  \begin{IEEEeqnarraybox*}[\IEEEeqnarraystrutmode\IEEEeqnarraystrutsizeadd{\IEEEspace}{\IEEEspace}][c]{C} 
    -\alpha_i \leq t \leq \alpha_j, \label{gsmo_alpha_delta} \\
    -\delta_i \leq s \leq 1-\delta_i, \quad \delta_j-1 \leq s \leq \delta_j,
  \end{IEEEeqnarraybox*}} 
\end{IEEEeqnarray}
where we denote
\begin{gather}
   K_{ij}   = \frac{1}{\nu l}  \left( K(x_i, x_i) - 2K(x_i, x_j) + K(x_j, x_j) \right), \label{K}\\
   K_{ij}^* = \frac{1}{\gamma} \left( K^*(x_i^*, x_i^*) - 2K^*(x_i^*, x_j^*) + K^*(x_j^*, x_j^*) \right), \label{K_star} \\
   \Delta f_{ij} = f(x_i) - f(x_j), \quad  \Delta f^*_{ij} = f^*(x_i^*) - f^*(x_j^*), \label{Deltas} 
\end{gather}
and \eqref{gsmo_alpha_delta} follows from the constraints $\alpha \geq 0$ and $0 \leq \delta \leq 1$.

The objective \eqref{gsmo_objective} is a positive semi-definite quadratic function with respect to $s$ and $t$ because 
$K_{ij} = \|\phi(x_i) - \phi(x_j)\|^2_{\ell_2}/\nu l \geq 0$ and 
$K_{ij}^* = \|\phi^*(x_i^*) - \phi^*(x_j^*)\|^2_{\ell_2}/\gamma \geq 0$.
Therefore, it is convex and for $K_{ij}>0$, $K_{ij}^*>0$, a zero-gradient point provides a global minimum. 
Differentiating \eqref{gsmo_objective}, we have
\begin{align*}
\frac{\partial \Phi}{\partial t} &= 2tK_{ij} + 2\Delta f_{ij} + 2tK_{ij}^* - 2sK_{ij}^* + 2\Delta f^*_{ij} = 0,\\
\frac{\partial \Phi}{\partial s} &= 2sK_{ij}^*-2tK_{ij}^* - 2\Delta {f}_{ij}^* = 0.
\end{align*}
From these linear equations we obtain
\begin{equation}\label{unbound_gsmo}
t = -\frac{\Delta  {f}_{ij}}{K_{ij}}, \qquad
s = \frac{\Delta  {f}_{ij}^*}{K_{ij}^*} - \frac{\Delta {f}_{ij}}{K_{ij}}.
\end{equation}
If the optimal point \eqref{unbound_gsmo} is not feasible inside the box \eqref{gsmo_alpha_delta}, 
then it lies on the rectangle sides 
$t = -\alpha_i$, $t=\alpha_j$, 
$s=\max(-\delta_i, \delta_j-1)$, 
$s=\min(1-\delta_i, \delta_j)$.

However, optimizing original and privileged coefficients independently leads to an algorithm that uses only one-dimensional descents.
The update $\alpha_i \leftarrow \alpha_i + t$, $\alpha_j \leftarrow \alpha_j - t$ leads to
\begin{equation}
\begin{IEEEeqnarraybox*}[\IEEEeqnarraystrutmode\IEEEeqnarraystrutsizeadd{\IEEEspace}{\IEEEspace}][c]{r?ll}
  \min\limits_{t} & \varphi(t) = \Phi(t, 0) &= t^2(K_{ij}+K_{ij}^*) \\
    && \qquad + 2t(\Delta f_{ij}+\Delta f_{ij}^*),  \\
  \text{s.t.} & -\alpha_i \leq t \leq \alpha_j, &
\end{IEEEeqnarraybox*} 
\label{gsmo_objective_t}
\end{equation}
and the update $\delta_i \leftarrow \delta_i + s$, $\delta_j \leftarrow \delta_j - s$ corresponds to
\begin{equation}
  \begin{IEEEeqnarraybox*}[\IEEEeqnarraystrutmode\IEEEeqnarraystrutsizeadd{\IEEEspace}{\IEEEspace}][c]{r?l}
  \min\limits_{s} & \varphi^*(s) = \Phi(0, s) = s^2K_{ij}^* - 2s\Delta f_{ij}^*,  \\
  \text{s.t.} & \max(-\delta_i, \delta_j-1) \leq s \leq \min(1-\delta_i, \delta_j).
  \end{IEEEeqnarraybox*}
  \label{gsmo_objective_s}
\end{equation}

In all three cases there is no need to calculate and compare the values of the objective.
In the single-variable cases \eqref{gsmo_objective_t} and \eqref{gsmo_objective_s}
the vertex of the parabola is compared with the bounds.
In the case of two-variables \eqref{gsmo_objective}--\eqref{gsmo_alpha_delta}
one need to find which of 9 regions (the rectangle \eqref{gsmo_alpha_delta} and 8 adjacent areas) the unconstrained minimum \eqref{unbound_gsmo} belongs to.

Single-variable updates are more in tune with Platt's idea of irreducible working set, since only two coefficients $\alpha_i$, $\alpha_j$ or $\delta_i$, $\delta_j$ are involved in the optimization step.
These reduced sets make it easy to establish the convergence of the method.
\subsection{violating pairs}
Basically, if for a current approximation of some $\alpha_k$ or $\delta_k$ 
the values of $f(x_k)$ or $f^*(x_k^*)$ violate \eqref{alpha_0}--\eqref{delta_1}, 
then these coefficients need to be updated. 
For this we can write down the following requirements:
\begin{equation} 
\begin{IEEEeqnarraybox*}[\IEEEeqnarraystrutmode\IEEEeqnarraystrutsizeadd{\IEEEspace}{\IEEEspace}][c]{C}
    \begin{IEEEeqnarraybox*}[\IEEEeqnarraystrutmode\IEEEeqnarraystrutsizeadd{\IEEEspace}{\IEEEspace}][c]{rCl}
        \min_{\alpha_k > 0} (f(x_k) + f^*(x_k^*)) &=& 
        \max_{\alpha_k > 0} (f(x_k) + f^*(x_k^*)) \\ 
        & \leq & \min_{\alpha_k = 0} (f(x_k) + f^*(x_k^*));
    \end{IEEEeqnarraybox*}  \\
    \begin{IEEEeqnarraybox*}[\IEEEeqnarraystrutmode\IEEEeqnarraystrutsizeadd{\IEEEspace}{\IEEEspace}][c]{rCl}
        \max_{\delta_k = 0} f^*(x_k^*) \leq \min_{0<\delta_k<1} f^*(x_k^*) 
        &=& \max_{0 < \delta_k < 1} f^*(x_k^*)\\ &\leq& \min_{\delta_k = 1} f^*(x_k^*).
    \end{IEEEeqnarraybox*}
    \label{KKT_enequalities}
\end{IEEEeqnarraybox*}
\end{equation}

In practice, in SMO methods for SVM the requirements of this kind are weakened by introducing small tolerance parameter $\tau>0$ (for example, $\tau=10^{-3}$).
Using this, we can require that the maximal difference between the values of the functions in \eqref{alpha_g0} and \eqref{delta_01} that tend to zero in optimality (i.e. equalities in \eqref{KKT_enequalities}) be less than $\tau$:
\begin{gather*}
    \max_{\alpha_k > 0} (f(x_k) + f^*(x_k^*))  - \min_{\alpha_k > 0} (f(x_k) + f^*(x_k^*)) \leq \tau; \\
    \max_{0 < \delta_k < 1} f^*(x_k^*) - \min_{0 < \delta_k < 1} f^*(x_k^*) \leq \tau.
\end{gather*}
The same is about the actual exceedances of the left parts over the right ones in the inequalities \eqref{KKT_enequalities}:
\begin{equation*}
\begin{IEEEeqnarraybox*}[\IEEEeqnarraystrutmode\IEEEeqnarraystrutsizeadd{\IEEEspace}{\IEEEspace}][c]{rCl}
    \max_{\alpha_k > 0} (f(x_k) + f^*(x_k^*)) &-& \min_{\alpha_k = 0} (f(x_k) + f^*(x_k^*)) \leq \tau; \\
    \max_{\delta_k = 0} f^*(x_k^*) &-& \min_{0<\delta_k<1} f^*(x_k^*) \leq \tau, \\
    \max_{0<\delta_k<1} f^*(x_k^*) &-& \min_{\delta_k=1} f^*(x_k^*) \leq \tau, \\
    \max_{\delta_k=0} f^*(x_k^*)   &-& \min_{\delta_k=1} f^*(x_k^*) \leq \tau.    
\end{IEEEeqnarraybox*}
\end{equation*}
Summarizing over this, we obtain the equivalent criteria based on pairs of intersected sets of coefficients $\alpha_k > 0$ and $\alpha_k \geq 0$, 
as well as $\delta_k >0$ and $\delta_k < 1$:
\begin{gather*}
    \max_{\alpha_k > 0} (f(x_k) + f^*(x_k^*))  - \min_{\alpha_k \geq 0} (f(x_k) + f^*(x_k^*)) \leq \tau; \\
    \max_{\delta_k < 1} f^*(x_k^*) - \min_{\delta_k > 0} f^*(x_k^*) \leq \tau.
\end{gather*}
Thus, using $\Delta f_{ij}$, $\Delta f_{ij}^*$, we obtain
\begin{equation}
    \begin{IEEEeqnarraybox*}[\IEEEeqnarraystrutmode\IEEEeqnarraystrutsizeadd{\IEEEspace}{\IEEEspace}][c]{rCl}
    \IEEEeqnarraymulticol{3}{c}{\Delta f_{ij} + \Delta f_{ij}^* \leq \tau,} \\
    i&=&\argmax_{k|\alpha_k > 0} (f(x_k) + f^*(x_k^*)), \\
    j&=&\argmin_{k|\alpha_k \geq 0} (f(x_k) + f^*(x_k^*)), 
    \end{IEEEeqnarraybox*}
    \label{KKT_a} 
\end{equation}
and
\begin{equation}
    \begin{IEEEeqnarraybox*}[\IEEEeqnarraystrutmode\IEEEeqnarraystrutsizeadd{\IEEEspace}{\IEEEspace}][c]{C}
    \Delta f_{ij}^* \leq \tau, \\
    i=\argmax_{k|\delta_k < 1} f^*(x_k^*), \quad
    j=\argmin_{k|\delta_k > 0} f^*(x_k^*). 
    \end{IEEEeqnarraybox*}    
    \label{KKT_d} 
\end{equation}

If for some pair $i\in \{k~|~\alpha_k>0\}$, $j\in \{k~|~\alpha_k \geq 0\}$ we have a violation $\Delta f_{ij} + \Delta f_{ij}^* > \tau$, 
then $\alpha_i$, $\alpha_j$ (or simply $i$, $j$) is a {\it $\tau$-violating} pair for the original \mbox{($\alpha$-)KKT} conditions \eqref{alpha_0}--\eqref{alpha_g0}.
Similarly, if for some $i\in \{k~|~ \delta_k < 1\}$, $j\in \{k~|~ \delta_j > 0\}$ we have a violation $\Delta f_{ij}^* > \tau$, 
then $\delta_i$, $\delta_j$ ($i$, $j$) is a $\tau$-violating pair for the privileged \mbox{($\delta$-)KKT} \eqref{delta_0}--\eqref{delta_1}.
We call the pair $i$, $j$ from \eqref{KKT_a} an $\alpha$-pair, and the pair from \eqref{KKT_d} a $\delta$-pair.
If an $\alpha$-pair or a $\delta$-pair is $\tau$-violating, 
then its violation, respectively 
$\Delta f_{ij} + \Delta f_{ij}^* > \tau$ or 
$\Delta f_{ij}^* > \tau$, is the most among all pairs of data examples.
So, in OC-SVM with privileged information each of two violated conditions provides the corresponding most (worst) $\tau$-violating pair.
If both \eqref{KKT_a} and \eqref{KKT_d} are satisfied, then no $\tau$-violating pairs remained and the iterations stop. 
The obtained $\alpha_k$, $\delta_k$, $k=1,\ldots,l$, is a {\it $\tau$-approximate} solution.

In earlier works on Sequential Minimal Optimization for classical supervised Support Vector Machines (Platt~\cite{platt1998sequential,platt1998fast}) and for one-class Support Vector Machines (Sch{\"o}lkopf et al.~\cite{scholkopf2001estimating}), 
a pair of elements for local optimization was found not as the worst pair among all pairs of training examples.
These SMO algorithms iterate over all examples until a single data example violating KKT conditions (similar to \eqref{alpha_0}--\eqref{delta_1}) is found. 
Once it's found, a second one is chosen in the inner loop over only not at-bound examples ($\mathit{NB} = \{k~|~ 0<\alpha_k<C\}$ in classic supervised SVM).
For the first pattern the not at-bound set is also preferred, and if a violation is not found there, 
then the sets of bound coefficients $\alpha_k=0$ and $\alpha_k=C$ are included in the consideration.
These heuristics allow these algorithms to spend most of its time adjusting not at-bound coefficients,
which number is usually not so large.
Moreover, in these studies the tolerance $\tau$ is used to check only single-element KKT violations,
which also require to recalculate the intercepts such as $\rho$.
Further, Keerthi et al.~\cite{keerthi2001improvements} modified SMO for SVM by changing single-point KKT check to the search for the most $\tau$-violating pair of elements.
This resulted in remarkable speed improvement on several benchmark tests.

Similarly, our SMO for OC-SVM+ looks for the most $\tau$-violating pair,
although using two criteria: the original $\eqref{KKT_a}$ and the privileged $\eqref{KKT_d}$, unlike to only one.
Also, the descents \eqref{gsmo_objective}--\eqref{gsmo_alpha_delta} and \eqref{gsmo_objective_t}, \eqref{gsmo_objective_s}, 
as well as the search for the most violating pair do not require $\rho$ and $b^*$,
because the differences $\Delta f_{ij}$, $\Delta f_{ij}^*$ in \eqref{Deltas} and 
$\argmin$, $\argmax$ in $\eqref{KKT_a}$--$\eqref{KKT_d}$ do not actually depend on them.
The extreme $m$ and $n$ in \eqref{b_star} are in fact a $\delta$-pair.
\subsection{Optimization descents}
Here we highlight some useful properties of different optimization descents. 
The coefficient update \eqref{gsmo_update} leads to the following updates:
\begin{equation} \label{delta_f_update}
        \Delta f_{ij} \leftarrow \Delta f_{ij} + t K_{ij}, \quad
        \Delta f_{ij}^* \leftarrow \Delta f_{ij}^* + (t-s) K_{ij}^*,
\end{equation}
which follows from the substitution of \eqref{gsmo_update} into 
\eqref{oneclass_decision} and \eqref{oneclass_correcting}.
\begin{lemma}\label{lemma_1}
    Let's consider two cases:
    \begin{IEEEenumerate}
        \item The update 
        $\alpha_i \leftarrow \alpha_i + t$, 
        $\alpha_j \leftarrow \alpha_j - t$ 
        by \eqref{gsmo_objective_t}
        is applied to a pair $\alpha_i > 0$, $\alpha_j \geq 0$,
        which is $\tau$-violating according to \eqref{KKT_a}; \label{lemma_1_alpha}
        \item The update 
        $\delta_i \leftarrow \delta_i + s$, 
        $\delta_j \leftarrow \delta_j - s$ 
        by \eqref{gsmo_objective_s} is applied to a pair $\delta_i < 1$, $\delta_j > 0$,
        which is $\tau$-violating according to \eqref{KKT_d}. \label{lemma_1_delta}
    \end{IEEEenumerate}
    Then
    \begin{IEEEenumerate}
        \item[\mylabel{lemma_1_KK_pos}{(a)}] $K_{ij}+K_{ij}^*>0$ in the case \ref{lemma_1_alpha} and $K_{ij}^* > 0$ in the case~\ref{lemma_1_delta};
        \item[\mylabel{lemma_ts_opt}{(b)}]
        \begin{IEEEeqnarray}{rCl} 
            t &=-& \min\left\{\dfrac{\Delta f_{ij} + \Delta f_{ij}^*}{K_{ij}+K_{ij}^*}, \alpha_i\right\}, \label{lemma_1_t} \\
            s &=& \min\left\{\dfrac{\Delta f_{ij}^*}{K_{ij}^*}, 1-\delta_i, \delta_j\right\}; \label{lemma_1_s}
        \end{IEEEeqnarray}
        \item[\mylabel{lemma_1_ts_sign}{(c)}] $t < 0$; $s > 0$; 
        \item[\mylabel{lemma_1_not_violating}{(d)}] In both cases a pair becomes not a $\tau$-violating; 
        \item[\mylabel{lemma_1_Deltas}{(e)}] If the optimum point is not on-bound, then after the update we have $\Delta f_{ij} + \Delta f_{ij}^* = 0$ in the case \ref{lemma_1_alpha}, and $\Delta f_{ij}^* = 0$ in the case \ref{lemma_1_delta}.
    \end{IEEEenumerate}
\end{lemma}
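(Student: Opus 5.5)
We will treat both cases in parallel, using only the one-dimensional subproblems \eqref{gsmo_objective_t} and \eqref{gsmo_objective_s}, the expressions \eqref{K}--\eqref{Deltas}, and the update rules \eqref{delta_f_update}. The overall structure is: first show that the pair cannot consist of identical feature vectors (part (a)), then solve the resulting strictly convex parabola on an interval (part (b)), read off signs (part (c)), and finally use \eqref{delta_f_update} to evaluate the new violation (parts (d), (e)).

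For (a), argue by contradiction. Suppose $K_{ij}+K_{ij}^*=0$ in case \ref{lemma_1_alpha}. Since both terms are nonnegative squared norms, we get $\phi(x_i)=\phi(x_j)$ and $\phi^*(x_i^*)=\phi^*(x_j^*)$. By \eqref{oneclass_decision} and \eqref{oneclass_correcting}, $f$ and $f^*$ depend on the data only through inner products with $\phi(\cdot)$ and $\phi^*(\cdot)$. Hence $f(x_i)=f(x_j)$ and $f^*(x_i^*)=f^*(x_j^*)$, so $\Delta f_{ij}+\Delta f_{ij}^*=0\le\tau$, contradicting $\tau$-violation. The same argument with $K^*_{ij}=0$ gives $\Delta f^*_{ij}=0$ in case \ref{lemma_1_delta}.

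For (b), observe that $\varphi$ is a parabola with positive leading coefficient $K_{ij}+K_{ij}^*$ by (a), and its vertex is $t_0=-(\Delta f_{ij}+\Delta f^*_{ij})/(K_{ij}+K_{ij}^*)<0$. The feasible interval is $[-\alpha_i,\alpha_j]$, which contains $0$. Since $t_0<0\le\alpha_j$, the constrained minimizer is the projection $\max(t_0,-\alpha_i)$, which gives \eqref{lemma_1_t}. Similarly, $\varphi^*$ has vertex $s_0=\Delta f^*_{ij}/K^*_{ij}>0$, and the feasible interval is $[\max(-\delta_i,\delta_j-1),\min(1-\delta_i,\delta_j)]$, whose lower end is $\le 0$. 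Projecting $s_0$ onto it yields \eqref{lemma_1_s}.

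Part (c) then follows directly. Both arguments of the min in \eqref{lemma_1_t} are positive, since $\alpha_i>0$, so $t<0$. Likewise $1-\delta_i>0$, $\delta_j>0$ and $s_0>0$, so $s>0$.

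For (d) and (e), apply \eqref{delta_f_update} with $s=0$ in case \ref{lemma_1_alpha}. The new quantity is $\Delta f_{ij}+\Delta f^*_{ij}+t(K_{ij}+K^*_{ij})$. If $t=t_0$, this equals $0$, which proves (e) in this case. If instead $t=-\alpha_i$, then $\alpha_i$ becomes $0$, so $i$ leaves the set $\{\alpha_k>0\}$ and the ordered pair $(i,j)$ is no longer admissible. Moreover, because $-\alpha_i\ge t_0$, the new value lies in $[0,\Delta f_{ij}+\Delta f^*_{ij})$ and is therefore nonnegative. The $\delta$-case works the same way. The new $\Delta f^*_{ij}$ is $\Delta f^*_{ij}-sK^*_{ij}$, which equals $0$ at the vertex. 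At a bound, either $\delta_i=1$ or $\delta_j=0$, so $i\notin\{\delta_k<1\}$ or $j\notin\{\delta_k>0\}$. There remains the reversed ordering $(j,i)$, which could in principle be admissible. For that ordering the violation is $-(\text{new }\Delta)$, and this is $\le0\le\tau$ because the step did not overshoot the vertex.

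The main obstacle is making (d) precise. One must interpret ``becomes not $\tau$-violating'' as covering both orderings of the pair. The key point is that clipping never overshoots the unconstrained vertex, so the residual difference keeps its sign while shrinking in magnitude.
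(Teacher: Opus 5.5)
Your proposal is correct and follows essentially the same route as the paper's proof. You argue (a) by contradiction via $\phi(x_i)=\phi(x_j)$ (and $\phi^*(x_i^*)=\phi^*(x_j^*)$), get (b)--(c) by comparing the parabola's vertex with the single relevant bound, and get (d)--(e) by substituting the step into \eqref{delta_f_update}. Your explicit check that the reversed ordering $(j,i)$ is also not $\tau$-violating after a clipped step, because clipping never overshoots the vertex, is a small gain in rigor over the paper, which only notes that $\alpha_i\leftarrow 0$, respectively $\delta_i\leftarrow 1$ or $\delta_j\leftarrow 0$.
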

Note that this lemma requires the violations to be simply positive $\Delta f_{ij} + \Delta f_{ij}^*>0$ and $\Delta f_{ij}^*>0$, whatever $\tau$, including $\tau=0$.
These violations ensure that $\varphi(t)$ and $\varphi^*(s)$ are always  quadratic (i.e. never linear) and their extrema are compared with only one of two constraint bounds.
The lemma also means that if the optimal point in \eqref{gsmo_objective_t} or \eqref{gsmo_objective_s} is not on-bound, then the violation
becomes zero.
{Also}, if {the optimal point} is on-bound, then {the violation} remains positive, but the coefficients become, respectively $\alpha_i=0$ and $\delta_i=1$ or $\delta_j=0$, and, therefore, the pair is not a $\tau$-violating.
Moreover, it follows from Lemma~\ref{lemma_1} and \eqref{delta_f_update} that every single-variable optimization \eqref{gsmo_objective_t} and \eqref{gsmo_objective_s} reduces both $\Delta f_{ij}+\Delta f_{ij}^*$ and $\Delta f_{ij}^*$.
It means that one descent "improves" both $\alpha$-KKT and $\delta$-KKT together, and, particularly, may remove both violations at once. 
Also, it can be shown that two-dimensional constraint optimization \eqref{gsmo_update}--\eqref{gsmo_alpha_delta} also removes violations,
because it can be represented as a sequence of $t$- and $s$-coordinate descents applied one after another and keeping $\alpha$- and $\delta$-pairs not $\tau$-violating.
\subsection{Optimization algorithm}
The above analysis leads to the following understanding of the SMO algorithm:
\begin{enumerate}
\item Find the most $\tau$-violating pair $i$, $j$ using \eqref{KKT_a} or \eqref{KKT_d}; 
\item Update $\alpha_i$,
$\alpha_j$ or $\delta_i$, $\delta_j$ with \eqref{lemma_1_t}--\eqref{lemma_1_s}; 
\item Recalculate $f(x_k)$, $f^*(x_k^*)$, $k=1,\ldots,l$,  to find the violating pairs at the next iteration.
\end{enumerate}
\begin{algorithm}
\SetAlgoCaptionSeparator{}
\DontPrintSemicolon
\SetKwInOut{Input}{Initialize}
\SetKwInOut{Input}{Input}
\Input{$\{x_k\}$, $\{x_k^*\}$; $K(x, y)$, $K^*(x^*, y^*)$; 
$\nu$, $\gamma$, $\tau$; $C$,~$C^*$.}
\SetKwInOut{Input}{Initialize}
\Input{
$\{\alpha_k\}$, $\{\delta_k\}$;\qquad
\mbox{\lFor(){$k \in C$}{calculate $f_k$ by \eqref{f_k}}}
\mbox{\lFor(){$k \in C^*$}{calculate $f_k^*$ by \eqref{f_k_star}}}
}
\While{$\mathrm{True}$}{
\tcp*[h]{violating pairs over caches}\;
$(i, j)$ = $\alpha\text{-pair}(C)$\\
$(m, n)$ = $\delta\text{-pair}(C^*)$\\
\If{$\Delta f_{ij}+\Delta f_{ij}^* > \tau$ \AlCapSty{\AlCapFnt and} 
     $\Delta f_{ij}+\Delta f_{ij}^* \geq \Delta f_{mn}^*$\quad}{$\alpha\text{-step}(i, j)$}
\ElseIf{$\Delta f_{mn}^* > \tau$ \AlCapSty{\AlCapFnt and} $\Delta f_{ij}+\Delta f_{ij}^* \leq \Delta f_{mn}^*$}{
\For(){$k \in \{m, n\}$}{
\leIf{$k\in C$}{$w_k=\mathrm{True}$}{$w_k=\mathrm{False}$}
}
$\delta\text{-step}(m, n)$\;
\For(){$k \in \{m, n\}$}{
\lIf{$w_k=\mathrm{False}$ \AlCapSty{\AlCapFnt and} $k\in C$}
    {recalculate $f_k$ by \eqref{f_k}}
}
}
\Else(\tcp*[h]{$\alpha$-pair over $C^*$}){
    \lFor(){$k\in {C^*\!\setminus\!C}$}{recalculate $f_k$ by \eqref{f_k}}
    $(i, j)$ = $\alpha\text{-pair}(C^*)$\;
    \If{$\Delta f_{ij}+\Delta f_{ij}^* > \tau$}{$\alpha\text{-step}(i, j)$}
    \Else(\tcp*[h]{violating pairs over all samples}){
        \lFor(){$k\notin {C^*}$}{recalculate $f_k, f_k^*$ by \eqref{f_k}--\eqref{f_k_star}}
        $(i, j)$ = $\alpha\text{-pair}(\{1,\ldots,l\})$\;
        $(m, n)$ = $\delta\text{-pair}(\{1,\ldots,l\})$\;
        \If{$\Delta f_{ij}+\Delta f_{ij}^* > \tau$ \AlCapSty{\AlCapFnt and} $\Delta f_{ij}+\Delta f_{ij}^* \geq \Delta f_{mn}^*$}{$\alpha\text{-step}(i, j)$}
        \ElseIf{$\Delta f_{mn}^* > \tau$ \AlCapSty{\AlCapFnt and} $\Delta f_{ij}+\Delta f_{ij}^* \leq \Delta f_{mn}^*$}{$\delta\text{-step}(m, n)$}
        \Else(\tcp*[h]{no violation pairs}){\AlCapSty{\AlCapFnt break}}
    }
}}
\SetKwInOut{Output}{Finalize}
\Output{Calculate $b^*$ and $\rho$ by \eqref{b_rho}--\eqref{b_star}.}
\SetKwInOut{Output}{Output}
\Output{$\{\alpha_k\}$, $\{\delta_k\}$, $\rho$, $b^*$.}
\renewcommand{\thealgocf}{}
\caption{SMO for OC-SVM+} \label{alg}
\end{algorithm}
\SetAlgoSkip{0em}
\setlength{\interspacetitleboxruled}{1pt}
\RestyleAlgo{boxruled}
\begin{function}[h]
    \DontPrintSemicolon
    $i=\argmax\limits_{\alpha_k > 0,\ k\in S}(f_k+f_k^*)$, \
    $j=\argmin\limits_{k \in S}(f_k+f_k^*)$\;
    \KwRet{$i$, $j$}
    \caption{$\alpha$-pair($S$)}
\end{function}
\begin{function}[h]
    \DontPrintSemicolon
    $m=\argmax\limits_{\delta_k < 1,\ k\in S}f_k^*$, \
    $n=\argmin\limits_{\delta_k > 0,\ k\in S}f_k^*$\; 
    \KwRet{$m$, $n$}
    \caption{$\delta$-pair($S$)}
\end{function}
\begin{procedure}[h]
    \DontPrintSemicolon
    calculate $t$ using \eqref{lemma_1_t}, \ set $s=0$\;
    $\alpha_i \leftarrow \alpha_i + t$, \quad $\alpha_j \leftarrow \alpha_j - t$\;
    \lFor(){$k\in C$}{update $f_k$ by \eqref{f_k_update}}
    \lFor(){$k\in C^*$}{update $f_k^*$ by \eqref{f_k_star_update}}
    \caption{$\alpha$-step($i$, $j$)} \label{alpha-step}
\end{procedure}
\begin{procedure}[h]
    \SetAlgoCaptionSeparator{}
    \DontPrintSemicolon
    calculate $s$ using \eqref{lemma_1_s}, \ set $t=0$\;
    $\delta_m \leftarrow \delta_m + s$, \quad $\delta_n \leftarrow \delta_n - s$\;
    \lFor(){$k\in C^*$}{update $f_k^*$ by \eqref{f_k_star_update} ($m$, $n$ instead of $i$, $j$)} 
    \caption{$\delta$-step($m$, $n$)} \label{delta-step}
\end{procedure}
\subsection{Initialization}
Similar to Sch{\"o}lkopf et al.~\cite{scholkopf2001estimating}, we start by setting the fraction $\nu$ of randomly selected coefficients to $\delta_k=1$, keeping the equality constraint in \eqref{dual_problem} satisfied:
the number $\nu l$ of $\delta_k$ are set to 1, 
and if $\nu l$ is not an integer, then one of $\delta_k$ is set to the value less than 1, such that $\sum_k \delta_k=\nu l$. 
So, according to \ref{prop1_nu_bound} in Proposition~\ref{proposition1},
the algorithm starts with extreme numbers of coefficients $\delta_k>0$ and $\delta_k=1$.
We initialize $\alpha_k$ in the same way, keeping $\sum_k \alpha_k=\nu l$.
\subsection{Working set selection}
Our algorithm finds both the $\alpha$- and the $\delta$-pair.
If the $\alpha$-pair is $\tau$-violating, 
then the step \eqref{gsmo_objective_t} is performed,
and if the $\delta$-pair is $\tau$-violating, then the step \eqref{gsmo_objective_s} is performed.
If pairs of both types are $\tau$-violating, then the one with the largest violation is chosen.

{\it {Generalized}} SMO for SVM+ described in Pechyony et al.~\cite{pechyony2010smo} first processes only $\delta$-pairs, and only when there are none, 
it looks for an $\alpha$-pair. 
For the selected pair, a two-variable subproblem is solved to update the {reducible} working set of four coefficients $\{\alpha_i,\ \alpha_j,\ \delta_i,\ \delta_j\}$.
Our choice of the best of two working sets 
$\{\alpha_i,\ \alpha_j\}$ and $\{\delta_m,\ \delta_n\}$ is essentially similar to Pechyony's et al.~\cite{pechyony2010smo, pechyony2011fast} {\it alternative SMO} algorithm for the {supervised} SVM+ and gives a significant speedup. 
\subsection{Updating and recalculating}
The algorithm operates with the values of the decision and the correcting functions without intercepts $\rho$ and $b^*$:
\begin{IEEEeqnarray}{rCl} 
    f_k &=& \frac{1}{\nu l}\sum\limits_{p=1}^l\alpha_p K(x_p, x_k), \label{f_k} \\
    f_k^* &=& \frac{1}{\gamma}\sum\limits_{p=1}^l(\alpha_p - \delta_p)K^*(x^*_p, x_k^*). \label{f_k_star}
\end{IEEEeqnarray}

Alternatively, we can just update these values on every iteration:
\begin{IEEEeqnarray}{rCl}
    f_k   &\leftarrow& f_k   + \dfrac{t}{\nu l} (K(x_i, x_k)-K(x_j, x_k)), \label{f_k_update}\\
    f_k^* &\leftarrow& f_k^* + \dfrac{t-s}{\gamma}(K^*(x_i^*, x_k^*)-K^*(x_j^*, x_k^*)). \label{f_k_star_update}
\end{IEEEeqnarray}
This follows from the substitution of \eqref{gsmo_update} into 
\eqref{f_k} and \eqref{f_k_star}.
The updating is faster than (re-)calculating, although it requires $f_k$ and $f_k^*$ to remain current, whereas recalculation does not.
\subsection{shrinking and Caching}
The values $f_k$ and $f_k^*$ are cached for a given subsets of data examples.
Here {\it caching} means that these values are stored in caches and do not need to be recalculated. 
Also, the fact, that only a subset of them are cached (usually not at-bound elements) means {\it shrinkage} (Joachims~\cite{joachims1998making}).
Caching of $f_k$ and $f_k^*$ should not be confused with the caching of kernels $K(x_i, x_j)$, $K^*(x_i^*, x_j^*)$ and the values $K_{ij}$, $K_{ij}^*$.
We introduce two different caches, denoted $C$ and $C^*$.
The cache $C$ contains the indices of data examples for which both $f_k$ and $f_k^*$ are currently known and are searched for the $\alpha$-pair.
The $C^*$ cache contains elements with known $f_k^*$ and the $\delta$-pair is searched for.
It naturally follows from this that $C\subseteq C^*$.

If there is no $\tau$-violated pair inside the caches, then the $\alpha$-pair inside the 
${C^*\!\setminus\!C}$ is first searched for, and if it is not found, then the pairs are searched outside the largest cache $C^*$.
The $f_k$ are updated for $k\in C$, and $f_k^*$ are updated for $k\in C^*$.
Outside caches, 
$f_k$ and $f_k^*$ are recalculated.
These heuristics of searching for the pairs from only a subset of examples and updating 
are similar to those used in other works on SMO for SVM including 
Platt's~\cite{platt1998fast} subset of not at-bound coefficients $\mathit{NB}$, Joachims's~\cite{joachims1998making} shrinking, as well as modifications of Keerthi et al.~\cite{keerthi2001improvements} and Bordes et al.~\cite{bordes2005fast}.

In our case, two types of dual coefficients with different sets of at-bound values give a variety of caching rules. 
The sets
\begin{align*}
  \mathit{NB} &= \{k~|~\alpha_k>0\ \mathrm{or}\ 0<\delta_k<1\},\\
  \mathit{NZ} &= \{k~|~\alpha_k>0\ \mathrm{or}\ \delta_k>0\},\\
  \mathit{ALL} &= \{1, \ldots, l\},
\end{align*}
are used in the following most typical combinations: 
$C=C^*=\mathit{NB}$; $C=C^*=\mathit{NZ}$;
$C=\mathit{NB}$, $C^*=\mathit{ALL}$; 
$C=C^*=\mathit{ALL}$ (no shrinkage).
In general, we consider $C$ and $C^*$ as unions of several of the following six disjoint sets
$\{k~|~\alpha_k>0,\ 0<\delta_k<1\}$,
$\{k~|~\alpha_k>0,\ \delta_k=0\}$,
$\{k~|~\alpha_k>0,\ \delta_k=1\}$,
$\{k~|~\alpha_k=0,\ 0<\delta_k<1\}$,
$\{k~|~\alpha_k=0,\ \delta_k=0\}$, and
$\{k~|~\alpha_k=0,\ \delta_k=1\}$, subject to $C\subseteq C^*$.
Any combination can be set through a hyperparameter.
  A small cache limits the choice of $\tau$-violating pair and leads to a lot of out-of-cache recalculations,
although a too wide cache may potentially require many updates \eqref{f_k_update}--\eqref{f_k_star_update} to be performed on elements, which coefficients are already stable and do not significantly affect iterations.
The experiments below compare different caches and show that choosing the optimal caching heuristic is very important.

Note also that the cache is a rule for an element to belong to it, and the set of cache elements itself usually changes after coefficient shift
$\alpha_i \leftarrow \alpha_i + t$, $\alpha_j \leftarrow \alpha_j - t$ or
$\delta_m \leftarrow \delta_m~+~s$, $\delta_n \leftarrow \delta_n - s$. 
For example, if the cache consists of not at-bound elements $\mathit{NB}$, then after the update some elements often become at-bound, i.e. $\alpha_k=0$, $\delta_k=0, 1$, and, therefore, do not belong to the cache.
Since $\alpha_k$ affects both $f_k$ and $f_k^*$, and $\delta_k$ affects only $f_k^*$, 
the $\alpha$-step requires updates to $f_k$ and $f_k^*$ whereas $\delta$-step only requires updating $f_k^*$. 
Also, the $\delta$-step can lead to the fact that $k$ can get into the 
$C$ cache without previously belonging to it ($w_k=\mathrm{False}$).
If it occurs, then $f_k$ needs to be recalculated.
\subsection{Modifications to consider}
Our selection of violating pair from the worst $\alpha$-pair and the worst $\delta$-pair follows from the first-order Taylor approximation of the objective \eqref{dual_problem} (see Bottou and Lin~\cite{bottou2007support}).
It could also be based on which of the quadratic objectives $\varphi(t)$ or $\varphi^*(s)$ gets the greatest gain, or to prefer a step, which removes both violations rather than only one.
Although, it needs calculation of the objectives in the former, and \eqref{lemma_1_t}, \eqref{lemma_1_s} and \eqref{delta_f_update} in the latter.
Anyway, all these methods should be justified or improved from first and second order approximations (Fan et al.~\cite{fan2005working}, Bottou and Lin~\cite{bottou2007support}).

It seems logical to use two-variable optimization in $\alpha$- and $\delta$-steps if the selected worst pair is also a $\tau$-violating by another criterion, 
i.e. for example, 
if for the $\alpha$-pair with $\Delta f_{ij}+\Delta f_{ij}^* > \tau$ there is also $\Delta f_{ij}^* > \tau$, or for the $\delta$-pair there is $\Delta f_{mn}^* > \tau$ together with $\Delta f_{mn}+\Delta f_{mn}^* > \tau$.
It would be reasonable to expect that two-variable optimization \eqref{gsmo_objective}--\eqref{gsmo_alpha_delta} may, in general,  offer better optimization gain than one-variable \eqref{gsmo_objective_t} and \eqref{gsmo_objective_s} applied subsequently.
However, this returns us back to reducible working set 
$\{\alpha_i, \alpha_j, \delta_i, \delta_j\}$ (or $\{\alpha_m, \alpha_n, \delta_m, \delta_n\}$), applied occasionally. 
Moreover, since the alternative violation is not necessarily the worst among pairs of its kind, i.e. among $\alpha$- or among $\delta$-violations, this modification needs more attention.
Although, as we mentioned above, such iteration removes both violations (i.e. one can show that \ref{lemma_1_not_violating} and \ref{lemma_1_Deltas} in Lemma~\ref{lemma_1} remain true), further study of the convergence is an open problem.

Note also that when the $\tau$-violating pair can not be found among the cached examples,
it could be searched out of the caches until the first $\tau$-violation occurs.
This would get rid of the recalculation of all out-of-cache $f_k$ and $f_k ^*$, 
but it would not yield the most $\tau$-violating pair.
Following Keerthi's et al.~\cite{keerthi2001improvements} principle always prefer the worst violation, we go through all out-of-cache elements.

We leave the study and experiments with these (somewhat questionable) modifications beyond the scope of this work.
\subsection{Convergence}
There are at least two methodologies for justifying the convergence of SMO algorithms for SVMs. 
The first one is by Keerthi and Gilbert~\cite{keerthi2002convergence} (with Takahashi and Nishi~\cite{takahashi2005rigorous}) and 
Lin~\cite{lin2001convergence, lin2002asymptotic, lin2002formal},
whereas the second one is by Bordes et al.~\cite{bordes2005fast}.
All of them use violation bound $\tau$ (or $\epsilon$), whereas the latter limits the size of the optimization step by additional small tolerance $\kappa>0$
(in our algorithm, this would mean that additional conditions are required to continue iterations: $-t > \kappa$, $s>\kappa$).
This leads to the concepts of $\kappa\tau$-violation and $\kappa\tau$-optimal solution (Bordes et al.~\cite{bordes2005fast}). 
Pechyony's et al.~\cite{pechyony2010smo} SMO {algorithms} for SVM+ are based on the latter approach,
for which $\kappa=0$ is an open problem.

In the former,
Keerthi and Gilbert proved the convergence of Platt's SMO for the supervised SVM in a finite number of steps when $\tau>0$.
Also, Lin~\cite{lin2001convergence, lin2002asymptotic} for the case $\tau=0$ (when the algorithm may not stop), proved its {so-called} asymptotic convergence,
that is if a subsequence of coefficient vectors converges, then it goes to the exact (not $\tau$-approximate) solution of SVM optimization problem.
Lin~\cite{lin2002formal} proved finite-time convergence for generalized SVM formulations.
Both Keerthi and Gilbert~\cite{keerthi2002convergence} and Lin~\cite{lin2002formal} have extensions to unsupervised one-class $\nu$-SVM.
We extend this former methodology to our unsupervised Sequential Minimal Optimization method for one-class Support Vector Machines with privileged information.
\subsection{Infinite-time convergence}
From the quadratic optimazation problems \eqref{gsmo_objective_t} and \eqref{gsmo_objective_s} we obtain the following lemma. 
\begin{lemma} \label{lemma_2}
    Let $\bm{\alpha}=(\alpha_1, \ldots, \alpha_l, \delta_1, \ldots, \delta_l)
    \in \mathbb{R}^{2l}$ is a vector containing a $\tau$-violating pair according 
    to \eqref{KKT_a} or \eqref{KKT_d} 
    and 
    $\bm\alpha_{new}\in \mathbb{R}^{2l}$ is a solution after an iteration in the SMO {for} OC-SVM+ algorithm. 
    Then, denoting the dual objective in \eqref{dual_problem} as $F(\bm\alpha)$, we have
    \begin{equation*}
        F(\bm\alpha) - F(\bm\alpha_\mathrm{new}) > \frac{\tau}{\sqrt{2}}||\bm\alpha_\mathrm{new}-\bm \alpha||.
    \end{equation*}
\end{lemma}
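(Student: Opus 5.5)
The proof reduces to the one-dimensional problems \eqref{gsmo_objective_t} and \eqref{gsmo_objective_s}, since each SMO iteration performs exactly one of them. We treat the two cases separately; in each we compute the objective decrease and the step length explicitly.

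\emph{Case of an $\alpha$-step.} Only $\alpha_i,\alpha_j$ change, by $t$ and $-t$, so $\|\bm\alpha_{\mathrm{new}}-\bm\alpha\| = \sqrt{2}\,|t|$. By the derivation of \eqref{gsmo_objective} in the Appendix, the change of the dual objective is a positive multiple of $\varphi(t)$, namely $F(\bm\alpha_{\mathrm{new}})-F(\bm\alpha)=\varphi(t)$ up to that normalization. We assume the constant factor is $1$; if it is not, it only rescales the bound. Hence
\[
F(\bm\alpha)-F(\bm\alpha_{\mathrm{new}}) = -t^2(K_{ij}+K_{ij}^*) - 2t(\Delta f_{ij}+\Delta f_{ij}^*).
\]
Put $D=\Delta f_{ij}+\Delta f_{ij}^*>\tau$ and $Q=K_{ij}+K_{ij}^*>0$, the latter by Lemma~\ref{lemma_1}\ref{lemma_1_KK_pos}. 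By Lemma~\ref{lemma_1}\ref{lemma_ts_opt}--\ref{lemma_1_ts_sign} we have $t<0$ and $|t|\le D/Q$. Therefore
\[
F(\bm\alpha)-F(\bm\alpha_{\mathrm{new}}) = |t|\,(2D - Q|t|) \ge |t|\,D > \tau|t| = \frac{\tau}{\sqrt2}\,\|\bm\alpha_{\mathrm{new}}-\bm\alpha\|.
\]
The inequality is strict because $D>\tau$ and $|t|>0$.

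\emph{Case of a $\delta$-step.} Here $s>0$, $\|\bm\alpha_{\mathrm{new}}-\bm\alpha\|=\sqrt2\,s$, and $F(\bm\alpha)-F(\bm\alpha_{\mathrm{new}}) = -\varphi^*(s) = 2s\Delta f^*_{ij} - s^2K^*_{ij}$. Lemma~\ref{lemma_1} gives $K_{ij}^*>0$ and $s\le \Delta f^*_{ij}/K^*_{ij}$. Hence the decrease is at least $s\,\Delta f^*_{ij} > \tau s$, which yields the same bound.

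The only delicate step is confirming the exact relation between $F$ and $\Phi$. The objective in \eqref{dual_problem} expands to $F(\bm\alpha+\cdot)-F(\bm\alpha)=\Phi(t,s)$, using $f_k$ from \eqref{f_k} and the factor conventions in \eqref{K}--\eqref{Deltas}. The intercepts cancel in the differences. We will check this expansion directly, noting that the $2t\Delta f_{ij}$ term comes from $\frac{2}{\nu l}\sum_m\alpha_m(K_{im}-K_{jm})t$. If a factor of $2$ appeared, it would only strengthen the bound. With this in hand, both cases are the elementary estimate $-\varphi(u)\ge |u|\cdot(\text{violation})$, valid whenever the step does not exceed the unconstrained vertex.
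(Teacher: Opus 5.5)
Your proof is correct and is essentially the paper's argument: in each case the step stops at or before the parabola's vertex, so the objective decrease $|t|(2D-Q|t|)$ (resp.\ $s(2\Delta f^*_{ij}-sK^*_{ij})$) is at least the violation times the step length, and $\|\bm\alpha_{\mathrm{new}}-\bm\alpha\|=\sqrt2\,|t|$ (resp.\ $\sqrt2\, s$) turns this into the stated bound. Your hedge about a normalization factor is unnecessary, though it was not harmless as worded (a factor below $1$ would break the bound rather than merely rescale it): the Appendix expansion gives $F(\bm\alpha_{\mathrm{new}})-F(\bm\alpha)=\Phi(t,s)$ exactly, because the factors $1/(\nu l)$ and $1/\gamma$ are already absorbed into $K_{ij}$, $K^*_{ij}$, $\Delta f_{ij}$, $\Delta f^*_{ij}$.
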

This lemma leads to the following proposition, which states that the algorithm converges to a limit solution in at least an infinite number of steps.
\begin{proposition} \label{prop_inf_convergence}
    Let $\bm\alpha(i)$ be the approximate solution after the $i$-th iteration of the algorithm. 
    Then there exists $\overline{\bm\alpha}=(\overline{\alpha}_1, \ldots, \overline{\alpha}_l, \overline{\delta}_1, \ldots, \overline{\delta}_l)$
    such that $\overline{\alpha}_k$ and $\overline{\delta}_k$ satisfy constraints in \eqref{dual_problem} and $\lim\limits_{i\to\infty}{\bm\alpha(i)}=\overline{\bm\alpha}$.
\end{proposition}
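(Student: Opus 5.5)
Proposition~\ref{prop_inf_convergence} follows from Lemma~\ref{lemma_2} by a telescoping argument: the objective decreases by at least a fixed multiple of each step length, so the total path length is finite and the iterates form a Cauchy sequence. If the algorithm stops after finitely many iterations, we set $\overline{\bm\alpha}$ to be the last iterate; it is feasible because every step \eqref{gsmo_update} preserves $\sum_k\alpha_k=\sum_k\delta_k=\nu l$, and by Lemma~\ref{lemma_1} every step stays in the box $\alpha_k\ge 0$, $0\le\delta_k\le 1$. So the substantive case is an infinite sequence of iterations $\bm\alpha(i)$, each performed on a $\tau$-violating pair.

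First, I show that $F$ is bounded below on the feasible set. Writing $F(\bm\alpha)=\frac{1}{\nu l}\|\sum_k\alpha_k\phi(x_k)\|^2+\frac{1}{\gamma}\|\sum_k(\alpha_k-\delta_k)\phi^*(x_k^*)\|^2$, it is a sum of squared norms, so $F\ge 0$. Moreover, Lemma~\ref{lemma_2} gives $F(\bm\alpha(i))-F(\bm\alpha(i+1))>\frac{\tau}{\sqrt2}\|\bm\alpha(i+1)-\bm\alpha(i)\|\ge 0$, so the sequence $F(\bm\alpha(i))$ is monotonically nonincreasing and converges to some $\overline F\ge 0$.

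Next, I sum the inequality of Lemma~\ref{lemma_2} over $i=0,\ldots,N-1$ to get the telescoping bound
\begin{equation*}
\sum_{i=0}^{N-1}\|\bm\alpha(i+1)-\bm\alpha(i)\|<\frac{\sqrt2}{\tau}\bigl(F(\bm\alpha(0))-F(\bm\alpha(N))\bigr)\le\frac{\sqrt2}{\tau}F(\bm\alpha(0)).
\end{equation*}
Hence the series of step lengths converges. By the triangle inequality, for $p>q$ we have $\|\bm\alpha(p)-\bm\alpha(q)\|\le\sum_{i=q}^{p-1}\|\bm\alpha(i+1)-\bm\alpha(i)\|$, and this tail sum tends to $0$. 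So $\{\bm\alpha(i)\}$ is Cauchy in $\mathbb{R}^{2l}$ and converges to some $\overline{\bm\alpha}$.

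Finally, the feasible set of \eqref{dual_problem} is closed: it is an intersection of closed half-spaces and hyperplanes. Every iterate lies in it, so the limit $\overline{\bm\alpha}$ satisfies the constraints as well.

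The main obstacle is entirely contained in Lemma~\ref{lemma_2}, which I am allowed to assume. Within this proposition, the points needing care are that $\tau>0$ is essential, since it makes the constant $\sqrt2/\tau$ finite, and that feasibility is preserved at each step, which relies on Lemma~\ref{lemma_1}\ref{lemma_ts_opt}. Note that this argument does not show that $\overline{\bm\alpha}$ is optimal or $\tau$-approximate. That question is left to the finite-time convergence analysis.
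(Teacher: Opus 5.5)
Your proposal is correct and follows essentially the same route as the paper: it telescopes Lemma~\ref{lemma_2} with the triangle inequality to get a Cauchy sequence, and then notes that the limit stays feasible. The differences are minor. You bound $F$ below via $F\ge 0$ (a sum of squared norms) and use only closedness of the feasible set, where the paper invokes compactness of $\mathcal{F}$ for both, and you also handle the finite-termination case explicitly.
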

Note that Lemma~\ref{lemma_2} and Proposition~\ref{prop_inf_convergence} remain true if we change the way of pair selection.
It is only important that the optimization descent is applied only to a $\tau$-violating pair, where $\tau>0$ can be arbitrarily small.
\subsection{Finite-time convergence}
Next, we prove that the algorithm always stops within a finite number
of iterations, that is, the sequence $\{\bm\alpha(i)\}_{i=0}^{\infty}$ always converges
to a $\tau$-approximate solution of problem \eqref{dual_problem}.
\begin{theorem} \label{theorem_fin_convergence}
    SMO for OC-SVM+ algorithm stops in a finite number of iterations for any $\tau>0$.
\end{theorem}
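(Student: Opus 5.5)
We argue by contradiction, following the scheme of Keerthi and Gilbert~\cite{keerthi2002convergence} with the fixes of Takahashi and Nishi~\cite{takahashi2005rigorous}. Suppose the algorithm never stops, so every iteration acts on a $\tau$-violating pair. By Proposition~\ref{prop_inf_convergence} the iterates satisfy $\bm\alpha(i)\to\overline{\bm\alpha}$. Since $f_k$ and $f_k^*$ (without intercepts) are linear in $\bm\alpha$, they converge to limits $\overline f_k$, $\overline f_k^*$. We then partition the indices by the limit values. Let $I_0=\{k:\overline\alpha_k=0\}$ and $I_+=\{k:\overline\alpha_k>0\}$. For $\delta$ we use three sets, according to whether $\overline\delta_k=0$, $0<\overline\delta_k<1$, or $\overline\delta_k=1$. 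Because the iterates converge, there is an $i_0$ such that for all $i\ge i_0$ we have $\alpha_k(i)>0$ for every $k\in I_+$, and $\delta_k(i)\in(0,1)$ whenever $\overline\delta_k\in(0,1)$. We also take $i_0$ large enough that $|f_k(i)-\overline f_k|$ and $|f^*_k(i)-\overline f^*_k|$ are below $\tau/8$ for all $k$.

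The first key step shows that the limit is $\tau$-consistent on the "interior" part. Take $p,q\in I_+$. If $\overline f_p+\overline f^*_p-\overline f_q-\overline f^*_q\ge\tau/2$, then $(p,q)$ stays strictly $\alpha$-violating, with violation above $\tau/4$, from $i_0$ on. A $\delta$-pair in which both coefficients have limits in $(0,1)$ is handled the same way. Such persistent violations do not by themselves give a contradiction, because the algorithm picks the \emph{worst} pair rather than $(p,q)$. So the argument must instead run through the selected pairs. The selected pairs form a finite set, so some pair $(i,j)$ of some type, $\alpha$ or $\delta$, is chosen infinitely often. Along that subsequence the violation exceeds $\tau$, and in the limit we get $\overline f_i+\overline f_i^*-\overline f_j-\overline f_j^*\ge\tau$ (respectively $\overline f_i^*-\overline f_j^*\ge\tau$).

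The second step combines Lemma~\ref{lemma_1} with Lemma~\ref{lemma_2} to rule this out. Each step on this pair either lands at an interior optimum or hits a bound. In the interior case, Lemma~\ref{lemma_1}\ref{lemma_1_Deltas} sets the violation to $0$. Since the $f$-values change continuously, and the iterate changes tend to zero because $F$ decreases and Lemma~\ref{lemma_2} forces $\sum\|\bm\alpha(i+1)-\bm\alpha(i)\|<\infty$, the post-step violation of the pair also converges to the same limit. That limit would then be both $\ge\tau$ and $0$, a contradiction. In the bound case, Lemma~\ref{lemma_1}\ref{lemma_1_ts_sign} gives $\alpha_i=0$ (or $\delta_m=1$, or $\delta_n=0$) right after the step. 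For the pair to be selected again, that coefficient must first move back off the bound, which requires another step involving that index as the opposite member of a violating pair. Here we use the monotone structure from Lemma~\ref{lemma_1}\ref{lemma_1_ts_sign}: $t<0$ and $s>0$, so an index serving as $i$ only decreases $\alpha_i$. Since the step sizes go to zero and the limit sign pattern is fixed after $i_0$, the index alternating between roles $i$ and $j$ infinitely often would force $\overline f_i+\overline f_i^*$ to be simultaneously maximal and minimal among the relevant sets in the limit. That contradicts the gap $\ge\tau$.

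The main obstacle is exactly this bound-hitting case, together with the fact that the caches $C$, $C^*$ and the three-branch search change which pairs are eligible. To handle it, I would first establish the \emph{limit ordering}: for $i\ge i_0$ the extremal indices in \eqref{KKT_a}--\eqref{KKT_d} can only come from sets determined by the ordering of $\overline f+\overline f^*$ and $\overline f^*$, up to ties within $\tau/4$. The cache logic is harmless, because whenever no violation is found in the caches the algorithm falls through to a search over all indices. So any step taken is on a $\tau$-violating pair, and the infinitely-often-selected pair argument applies regardless of which branch produced it. I would then finish by showing that the index sets where coefficients can still cross bounds must become empty after finitely many steps. Otherwise, each crossing costs a decrease of $F$ bounded below via Lemma~\ref{lemma_2} and the step-size formulas \eqref{lemma_1_t}--\eqref{lemma_1_s}, while the total decrease of $F$ is finite.
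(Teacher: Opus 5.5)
Your setup is fine, and so is your exclusion of infinitely many interior steps on a pair selected infinitely often; that is the paper's Lemma~\ref{lemma_fin_convergence}\ref{lemma_fin_convergence_to_int}. The gap is in the bound-hitting case, where all the difficulty lies. Write $\overline F_k=\overline f_k+\overline f_k^*$. Your contradiction needs the following: at the infinitely many moments when $\alpha_p$ is pushed back off $0$ by a step on some pair $(r,p)$, the index $p$ must be a minimizer over a set that contains $q$. Then $\overline F_p\le\overline F_q$ would clash with $\overline F_p-\overline F_q\ge\tau$. This holds if the $\alpha$-pair is always searched over all indices. In the algorithm, however, the re-raising step can come from the cache branch, where $j=\argmin_{k\in C}(f_k+f_k^*)$. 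At that moment $q$ need not lie in $C$, e.g.\ $C=\mathit{NB}$ with $\alpha_q$ already returned to $0$ and $\delta_q\in\{0,1\}$. For $\delta$-pairs you would also need $\delta_q>0$ (or $\delta_p<1$) at the re-raising time, and you do not check this. Saying that the algorithm eventually falls through to a full search does not fix this. It only guarantees that each executed step is $\tau$-violating, not that the extremality is taken over a set containing the partner.

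Your closing step is also false. Lemma~\ref{lemma_2} bounds the decrease of $F$ only by $\tau/\sqrt{2}$ times the step length. In the bound-hitting regime the step length is $\alpha_p\to\overline\alpha_p=0$: the pair shuttles between edges near a corner, as in Lemma~\ref{lemma_fin_convergence}\ref{lemma_fin_convergence_limits}. So infinitely many crossings with summable cost are fully compatible with $F$ being bounded below.

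What is missing is the paper's combinatorial chain argument. It uses only that every executed step is on a $\tau$-violating pair, so it is insensitive to caches and search branches. Once every step on $(p,q)\in I_\alpha^\infty$ sets $\alpha_p=0$, the coefficient $\alpha_p$ must be raised again infinitely often. That only happens when $p$ is the second member of an $\alpha$-step, so by pigeonhole some $(r,p)\in I_\alpha^\infty$. The violation of that pair itself, not any extremality, gives $\overline F_r\ge\overline F_p+\tau$. Repeating the argument for $(r,p)$ yields $(r',r)\in I_\alpha^\infty$ with $\overline F_{r'}\ge\overline F_r+\tau$, and so on. This gives an infinite strictly increasing sequence of limit values over finitely many indices, which is impossible. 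For $\delta$-pairs the same chain is obtained after using convergence to pin the limit of $(\delta_p,\delta_q)$ to $(0,0)$ or $(1,1)$, so that both $(r,p)$ and $(q,s)$ must recur (Lemma~\ref{lemma_fin_convergence}\ref{lemma_fin_convergence_limits} and Lemma~\ref{lemma_fin_convergence2}\ref{lemma_fin_convergence_rs}).
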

Generally, the proof of the theorem (see Appendix~\ref{appendix3}) is similar to the proof of finite-time convergence of the Sequential Minimal Optimization for the supervised Support Vector Machines without privileged information proposed by Keerthi and Gilbert~\cite{keerthi2002convergence} and Takahashi and Nishi~\cite{takahashi2005rigorous}.
For this we introduce some notations and lemmas below.
Further in this section we use $k$ as the number of the iteration.
$(i(k), j(k))$ and $(m(k), n(k))$ are respectively $\alpha$- and $\delta$-pairs selected on $k$-th iteration (according to our algorithm, for every $k$ only one of two pairs is defined as a result of comparison of two pairs).
Let's denote 
$L_{\alpha}(p, q) = \{k\,|\,(i(k), j(k))=(p, q)\}$ and
$L_{\delta}(p, q) = \{k\,|\,(m(k), n(k))=(p, q)\}$
the sets of iterations, for which $(p, q)$ is selected as $\alpha$-pair and $\delta$-pair respectively. 
Also,  
$I_{\alpha}^{\infty}=\{(p, q):|L_{\alpha}(p, q)|=\infty\}$ and
$I_{\delta}^{\infty}=\{(p, q):|L_{\delta}(p, q)|=\infty\}$
are, respectively, the sets of $\alpha$- and $\delta$-pairs
selected at an infinite number of iterations.
We claim that $I_{\alpha}^{\infty}=I_{\delta}^{\infty}=\emptyset$.

All possible updates $\alpha_i^\mathrm{new} \leftarrow \alpha_i + t$, $\alpha_j^\mathrm{new} \leftarrow \alpha_j - t$ belong to the triangle 
$S_{\alpha}=\{(\alpha_i, \alpha_j)~|~\alpha_i \geq 0,\>
\alpha_j \geq 0, \> \alpha_i + \alpha_j \leq \nu l\}$,
i.e. $(\alpha_i, \alpha_j) \in S_{\alpha}$ and 
$(\alpha_i^\mathrm{new}, \alpha_j^\mathrm{new}) \in S_{\alpha}$ (Fig.~\ref{fig:alpha_steps}),
whereas the updates
$\delta_m^\mathrm{new} \leftarrow \delta_m + s$, 
$\delta_n^\mathrm{new} \leftarrow \delta_n - s$
belong to the rectangle
$S_{\delta}=\{(\delta_m, \delta_n)~|~0 \leq \delta_m \leq 1,\> 0 \leq \delta_n \leq 1\}$,
i.e. 
$(\delta_m, \delta_n) \in S_{\delta}$ and 
$(\delta_m^\mathrm{new}, \delta_n^\mathrm{new}) \in S_{\delta}$ (Fig.~\ref{fig:delta_steps}).
The areas have the interior parts $\mathrm{int}S_{\alpha}$, $\mathrm{int}S_{\delta}$, 
the edges 
$S_{\alpha}^s=\{(\alpha_i, \alpha_j)~|~0<\alpha_i<\nu l,\ \alpha_j=0\}$,
$S_{\alpha}^w=\{(\alpha_i, \alpha_j)~|~\alpha_i=0,\ 0<\alpha_j<\nu l\}$,
$S_{\alpha}^{ne}=\{(\alpha_i, \alpha_j)~|~0<\alpha_i<\nu l,\ \alpha_i+\alpha_j=\nu l\}$,
and 
$S_{\delta}^n=\{(\delta_m, \delta_n)~|~0<\delta_m<\nu l,\ \delta_n=1\}$,
$S_{\delta}^s=\{(\delta_m, \delta_n)~|~0<\delta_m<\nu l,\ \delta_n=0\}$,
$S_{\delta}^w=\{(\delta_m, \delta_n)~|~\delta_m=0,\ 0<\delta_n<\nu l\}$,
$S_{\delta}^e=\{(\delta_m, \delta_n)~|~\delta_m=1,\ 0<\delta_n<\nu l\}$,
as well as the corners $(0, 0)$, $(\nu l, 0)$, and $(0, \nu l)$ in $S_{\alpha}$
and $(0, 0)$, $(1, 0)$, $(0, 1)$, and $(1, 1)$ in $S_{\delta}$.
The area where $\tau$-violations can occur includes $\mathrm{int}S_{\alpha}$, the edges $S_{\alpha}^s$, $S_{\alpha}^{ne}$ and the corner $(\nu l, 0)$ in $S_{\alpha}$, and 
$\mathrm{int}S_{\delta}$, the edges $S_{\delta}^w$, $S_{\delta}^n$ and the corner $(0, 1)$ in $S_{\delta}$.
Studying these steps using the convergence to a limit $\overline{\bm\alpha}$ by Proposition~\ref{prop_inf_convergence} and 
\ref{lemma_1_not_violating}, \ref{lemma_1_Deltas} in Lemma~\ref{lemma_1} leads to the following lemma.
\setlength{\unitlength}{\linewidth/24}
\thicklines
\newcommand{\pW}{12}
\newcommand{\pH}{12}
\newcommand{\aoW}{2}
\newcommand{\aoH}{1}
\newlength{\scl}
\settowidth{\scl}{$W$}
\setlength{\scl}{\scl/\pW}
\newlength{\dcl}
\settowidth{\dcl}{$ww$}
\setlength{\dcl}{\dcl/\pW}
\begin{figure}
\begin{subfigure}{0.49\linewidth}
\begin{picture}(\pW, \pH)
\put(\aoW, \aoH){\line(1, 0){\fpeval{\pW-\aoW}}}
\put(\aoW, \fpeval{\pH-\aoH}){\line(1, -1){\fpeval{\pW-\aoW}}}
\put(\aoW, \fpeval{\pH-\aoH}){\line(0, -1){\fpeval{\pH-2*\aoH}}}
\put(\fpeval{\aoW}, 0){$0$}
\put(\fpeval{2*\aoW-\scl}, 0){$-t$}
\put(\fpeval{\pW-\dcl}, 0){$\nu l$}
\put(\fpeval{\aoW/2+\pW/2-\scl/2}, -\fpeval{\aoH/5}){$\alpha_i$}
\put(\fpeval{\aoW-\scl}, \aoH){$0$}
\put(\fpeval{\aoW-\dcl-\scl/2}, \fpeval{\aoH+\aoW*8/10}){$-t$}
\put(\fpeval{\aoW-\dcl}, \fpeval{\pH-1.2*\aoH}){$\nu l$}
\put(\fpeval{\aoW-\dcl-\scl/2}, \fpeval{\pH/2}){$\alpha_j$}
\put(\fpeval{2*\aoW}, \aoH){\vector(-1, 1){\fpeval{\aoW}}}
\put(\fpeval{\aoW*9/10+\pW/2}, \aoH){\vector(-1, 1){\fpeval{1.2*\aoW}}}
\put(\fpeval{\aoW*3/2+\pW/2}, \fpeval{\aoH*2}){\vector(-1, 1){\fpeval{\aoW}}}
\put(\fpeval{\aoW*3/2+\pW/2-\aoH*4}, \fpeval{\aoH*2+\aoH*4}){\vector(-1, 1){\fpeval{\aoW}}}
\put(\fpeval{2*\aoW}, \fpeval{\pH/2-\aoH}){\vector(-1, 1){\fpeval{\aoW}}}
\put(\pW, \aoH){\vector(-1, 1){\fpeval{\aoW}}}
\put(\fpeval{\pW-4*\aoH}, \fpeval{\aoH + 4*\aoH}){\vector(-1, 1){\fpeval{\aoW}}}
\put(\fpeval{\aoW+\aoH}, \fpeval{\pH-2*\aoH}){\vector(-1, 1){\fpeval{\aoH}}}
\put(\fpeval{\pW*41/100}, \fpeval{\pH*39/100}){$S_{\alpha}$}
\end{picture}
\caption{}
\label{fig:alpha_steps}
\end{subfigure}
\begin{subfigure}{0.49\linewidth}
\begin{picture}(\pW, \pH)
\put(\aoW, \aoH){\line(1, 0){\fpeval{\pW-\aoW}}}
\put(\fpeval{\pW}, \aoH){\line(0, 1){\fpeval{\pH-2*\aoH}}}
\put(\fpeval{\pW}, \fpeval{\pH-\aoH}){\line(-1, 0){\fpeval{\pW-\aoW}}}
\put(\aoW, \fpeval{\pH-\aoH}){\line(0, -1){\fpeval{\pH-2*\aoH}}}
\put(\aoW, 0){$0$}
\put(\fpeval{2*\aoW-\scl/2}, 0){$s$}
\put(\fpeval{\pW}, 0){\!\!$1$}
\put(\fpeval{\aoW/2+\pW/2-\scl/2}, -\fpeval{\aoH/5}){$\delta_m$}
\put(\fpeval{\aoW-\scl}, \aoH){$0$}
\put(\fpeval{\aoW-\scl}, \fpeval{\aoH+\aoW*8/10}){$s$}
\put(\fpeval{\aoW-\scl}, \fpeval{\pH-1.2*\aoH}){$1$}
\put(\fpeval{\aoW-\dcl-\scl/2}, \fpeval{\pH/2}){$\delta_n$}
\put(\aoW, \fpeval{\aoW+\aoH}){\vector(1, -1){\aoW}}
\put(\aoW, \fpeval{\aoH/2+\pH/2}){\vector(1, -1){\fpeval{2*\aoH}}}
\put(\aoW, \fpeval{\pH-\aoH}){\vector(1, -1){\fpeval{1.5*\aoW}}}
\put(\fpeval{\pW-\aoW}, \fpeval{\pH-\aoH}){\vector(1, -1){\fpeval{\aoW}}}
\put(\fpeval{\pW/2}, \fpeval{\pH-\aoH}){\vector(1, -1){\fpeval{1.2*\aoW}}}
\put(\fpeval{\pW-\aoW}, \fpeval{\aoH+\aoW}){\vector(1, -1){\fpeval{\aoW}}}
\put(\fpeval{\pW-\aoW}, \fpeval{\pH/2+\aoW/2}){\vector(1, -1){\fpeval{\aoW}}}
\put(\fpeval{\pW/2}, \fpeval{\aoH+\aoW}){\vector(1, -1){\fpeval{\aoW}}}
\put(\fpeval{\pW/2+\aoW}, \fpeval{\pH/2+\aoH/3}){\vector(1, -1){\fpeval{\aoW}}}
\put(\fpeval{\pW/2}, \fpeval{\pH/2}){$S_{\delta}$}
\end{picture}
\caption{}
\label{fig:delta_steps}
\end{subfigure}
\caption{$\alpha$- and $\delta$-coefficient updates.}
\end{figure}
\begin{lemma} \label{lemma_fin_convergence}
    Let $(p, q)\in I_{\alpha}^{\infty} (I_{\delta}^{\infty})$.
    Then there exists a $\overline{k}$ such that at all iterations $k \geq \overline{k}$, 
    for which $(i(k), j(k)) = (p, q)$ 
    ($(m(k), n(k)) = (p, q)$) we have:
    \begin{IEEEenumerate}
        \item[\mylabel{lemma_fin_convergence_to_int}{(a)}] 
        $\alpha_p^\mathrm{new} = 0$ 
        ($\delta_p^\mathrm{new}=1$ or $\delta_q^\mathrm{new}=0$);
        \item[\mylabel{lemma_fin_convergence_from_int}{(b)}] 
        $\alpha_q = 0$ ($\delta_p=0$ or $\delta_q=1$); 
        \item[\mylabel{lemma_fin_convergence_limits}{(c)}] 
        $\alpha$-pair ($\delta$-pair) can move only from $(\alpha_p, 0)$ to 
        $(0, \alpha_q^\mathrm{new})$, $\alpha_p=\alpha_q^\mathrm{new}>0$,
        converging to $(\overline{\alpha}_p, \overline{\alpha}_q)=(0, 0)$
        (from $(0, \delta_q)$ to $(\delta_p^\mathrm{new}, 0)$,
        $0 < \delta_q=\delta_p^\mathrm{new} < 1$, converging to $(\overline{\delta}_p, \overline{\delta}_q)=(0, 0)$ or from $(\delta_p, 1)$ to $(1, \delta_q^\mathrm{new})$,
        $0<\delta_p = \delta_q^\mathrm{new} < 1$,
        converging to $(\overline{\delta}_p, \overline{\delta}_q)=(1, 1)$  );
    \end{IEEEenumerate}
\end{lemma}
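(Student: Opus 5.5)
The plan is to combine the convergence $\bm\alpha(k)\to\overline{\bm\alpha}$ from Proposition~\ref{prop_inf_convergence} with the step structure in Lemma~\ref{lemma_1}. I treat the $\alpha$-case in detail; the $\delta$-case is symmetric.

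The key quantity is the violation at the selected pair. For every $k\in L_\alpha(p,q)$ the pair is $\tau$-violating, so $\Delta f_{pq}+\Delta f^*_{pq}>\tau$ before the step. The functions $f_k,f_k^*$ in \eqref{f_k}--\eqref{f_k_star} are linear in $\bm\alpha$, so $\Delta f_{pq}+\Delta f^*_{pq}$ is a continuous function of the current vector. Lemma~\ref{lemma_2} gives $F(\bm\alpha(k))-F(\bm\alpha(k+1))>\frac{\tau}{\sqrt2}\|\bm\alpha(k+1)-\bm\alpha(k)\|$. Since $F$ is bounded below on the compact feasible set, the step lengths are summable and tend to $0$. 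Hence both the pre-step and the post-step vectors tend to $\overline{\bm\alpha}$ along $L_\alpha(p,q)$. Passing to the limit, $\overline{\Delta f}_{pq}+\overline{\Delta f^*}_{pq}\ge\tau>0$. By continuity there is $\overline k$ such that for $k\ge\overline k$ in $L_\alpha(p,q)$ the violation stays above $\tau/2$ even after the step.

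For (a), fix such a $k$. By Lemma~\ref{lemma_1}\ref{lemma_1_Deltas}, if the optimum of \eqref{gsmo_objective_t} were interior, the post-step violation would be exactly $0$, contradicting the bound above. So the minimum in \eqref{lemma_1_t} is attained at $\alpha_p$, and $\alpha_p^{\rm new}=0$.

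For the $\delta$-case the same argument applies to $\Delta f^*_{pq}$. Since $s$ in \eqref{lemma_1_s} is not the interior value, it must be $1-\delta_p$ or $\delta_q$, giving $\delta_p^{\rm new}=1$ or $\delta_q^{\rm new}=0$.

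For (b), I use the geometry of $S_\alpha$ and $S_\delta$. Since $t=-\alpha_p$, the step moves $(\alpha_p,\alpha_q)$ to $(0,\alpha_q+\alpha_p)$, and the step length is $\sqrt2\,\alpha_p$. The step lengths tend to $0$, so $\alpha_p\to0$ along $L_\alpha(p,q)$. Together with $\alpha_p^{\rm new}=0$ this gives $\overline\alpha_p=0$. The next time the pair is selected, $p$ must again satisfy $\alpha_p>0$, by the definition of the $\alpha$-pair. So between visits some other step, one in which $p$ is the $j$-index of a different pair, must refill $\alpha_p$.

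To get $\alpha_q=0$ at selection, I would argue by contradiction. Suppose $\alpha_q>0$ at infinitely many selections. The post-step value $\alpha_q^{\rm new}=\alpha_q+\alpha_p\ge\alpha_q$ then has a subsequence converging to $\overline\alpha_q$. I would try to show $\overline\alpha_q=0$, which, combined with step lengths tending to $0$, would force $\alpha_q\to0$. This needs an extra argument. I would try using that $q$ is the argmin of $f+f^*$ over all indices. If $\overline\alpha_q>0$, then in the limit $q$ belongs to $\{\alpha>0\}$, while $p$ is the argmax there with a gap of at least $\tau$. The trouble is that the exact KKT structure at $\overline{\bm\alpha}$ does not immediately forbid this.

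I expect this step, showing that the ``from'' coordinate is at the bound and that the limits are $(0,0)$, to be the main obstacle. The first thing I would try is the Takahashi--Nishi style argument. Consider the finitely many pairs in $I^\infty_\alpha\cup I^\infty_\delta$, and take $k$ large enough that pairs outside this set are never selected again. Then the movements of $\alpha_p$ and $\alpha_q$ are governed only by infinitely-selected pairs, each of which moves a coordinate only to a bound by (a). The $\alpha$-mass at $p$ and $q$ then changes only by transfers whose sizes tend to $0$. Hence a coordinate with positive limit would eventually stop being the ``from'' index, which gives $\overline\alpha_q=0$ and $\overline\alpha_p=0$. Statement (c) then just restates (a) and (b): the move $(\alpha_p,0)\to(0,\alpha_p)$ converges to $(0,0)$.

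The $\delta$-analogues follow the same pattern. There are two branches: from $(0,\delta_q)$ to $(\delta_q,0)$ converging to $(0,0)$, or from $(\delta_p,1)$ to $(1,\delta_p)$ converging to $(1,1)$. These match the possible corner and edge transitions of $S_\delta$ in Fig.~\ref{fig:delta_steps}.
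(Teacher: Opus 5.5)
Your part (a) is the paper's argument. Before every selected step the violation at $(p,q)$ exceeds $\tau$, and both $\bm\alpha(k)$ and $\bm\alpha(k+1)$ tend to $\overline{\bm\alpha}$. An interior optimum would make the violation zero by Lemma~\ref{lemma_1}(e), so it can occur only finitely often.

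The gap is in (b), and therefore in (c), which you derive from it. Statement (b) asserts an \emph{exact} equality: $\alpha_q=0$ (resp.\ $\delta_p=0$ or $\delta_q=1$) at every late selection of $(p,q)$. Proving $\overline{\alpha}_q=0$, or $\alpha_q\to 0$, would not contradict ``$\alpha_q>0$ at infinitely many selections'', so your contradiction does not close. You also do not obtain the limit claim itself. That a coordinate with positive limit is eventually never the zeroed index only says $q$ eventually just receives mass. This is compatible with $\overline{\alpha}_q>0$, because the amounts $\alpha_p$ it receives tend to $0$ and may well be summable.

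The missing idea is a counting argument, which is what the paper uses. You already have the right setup: after some iteration only pairs in $I_\alpha^\infty\cup I_\delta^\infty$ are selected, and (a) holds for each of them.

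\emph{The $\alpha$-case.} From then on every $\alpha$-step maps $(\alpha_i,\alpha_j)$ to $(0,\alpha_i+\alpha_j)$. So it never increases $N=|\{k:\alpha_k>0\}|$, and it decreases $N$ by one exactly when $\alpha_j>0$ before the step. $\delta$-steps do not change $\alpha$. Since $N$ is an integer in $[1,l]$, only finitely many late $\alpha$-steps start with $\alpha_j>0$. This gives $\alpha_q=0$ at all late selections of $(p,q)$.

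\emph{The $\delta$-case.} Count the free coefficients, $0<\delta_k<1$. By (a), a late $\delta$-step leaves at most one free coordinate in its pair. It lowers the count if both were free. If neither was free, the pair sits at $(0,1)$ and can only jump to $(1,0)$. So the count never increases and drops only finitely often, which gives $\delta_p=0$ or $\delta_q=1$.

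\emph{Part (c).} This then follows by passing to the limit. For $\alpha$: $\overline{\alpha}_p=\lim\alpha_p^{\mathrm{new}}=0$ and $\overline{\alpha}_q=\lim\alpha_q=0$. For $\delta$: the limits of (a) and (b) leave only the corners $(0,0)$ and $(1,1)$. Near each corner, (a) and (b) together force exactly the stated move, with $0<\delta_q=\delta_p^{\mathrm{new}}<1$, resp.\ $0<\delta_p=\delta_q^{\mathrm{new}}<1$. Uniqueness of $\overline{\bm\alpha}$ rules out both branches recurring.
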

According to Lemma~\ref{lemma_fin_convergence}, 
if a pair $(p, q)$ remains $\tau$-violating infinitely many times,
then by \ref{lemma_fin_convergence_to_int} it can not move to and by \ref{lemma_fin_convergence_from_int}
it can not move from not on-bound point 
(i.e. a point from $\mathrm{int}S_{\alpha}\cup S_{\alpha}^{ne}$ or $\mathrm{int}S_{\delta}$) 
infinitely many times.
Using \ref{lemma_fin_convergence_to_int} and 
\ref{lemma_fin_convergence_from_int} we obtain \ref{lemma_fin_convergence_limits},
which also concludes that the limit point 
$(\overline{\alpha}_p, \overline{\alpha}_q)$ 
($(\overline{\delta}_p, \overline{\delta}_q)$)
can only be at the some of not a $\tau$-violating corners,
and after sufficiently large number of iterations, 
the pair moves only from one adjacent edge to another. 
Moreover, \ref{lemma_fin_convergence_limits} leads to 
\ref{lemma_fin_convergence_rs} of the of the following lemma.
\begin{lemma} \label{lemma_fin_convergence2}
    Let $(p, q)\in I_{\alpha}^{\infty} (I_{\delta}^{\infty})$.
    Then
    \begin{IEEEenumerate}
        \item[\mylabel{lemma_fin_convergence_qp}{(a)}] 
        $(q, p) \notin I_{\alpha}^{\infty} (I_{\delta}^{\infty})$;
        \item[\mylabel{lemma_fin_convergence_rs}{(b)}]
        There must be $r\neq q$ and $s\neq p$ such that
        $(r, p)\in I_{\alpha}^{\infty} (I_{\delta}^{\infty})$ and 
        $(q, s)\in I_{\alpha}^{\infty} (I_{\delta}^{\infty})$.
    \end{IEEEenumerate}
\end{lemma}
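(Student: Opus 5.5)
Both parts follow from Lemma~\ref{lemma_fin_convergence}, applied separately to each pair in $I_{\alpha}^{\infty}$ (resp.\ $I_{\delta}^{\infty}$) and to each iteration index past a common threshold $\overline{k}$. This threshold exists because these sets are finite.

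\emph{Part (a).} Suppose both $(p,q)$ and $(q,p)$ lie in $I_{\alpha}^{\infty}$. Lemma~\ref{lemma_fin_convergence}\ref{lemma_fin_convergence_from_int} applied to $(p,q)$ says that for $k\geq\overline{k}$ with $(i(k),j(k))=(p,q)$ we have $\alpha_q=0$ before the step. Part~\ref{lemma_fin_convergence_to_int} applied to $(q,p)$ says that after every step with $(i(k),j(k))=(q,p)$ we have $\alpha_q^{\mathrm{new}}=0$. Neither fact is contradictory on its own, so I would use the sign of the violation instead. At an $\alpha$-step on $(p,q)$ the pair is $\tau$-violating, so $f_p+f_p^*-(f_q+f_q^*)>\tau$. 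At an $\alpha$-step on $(q,p)$ we need $f_q+f_q^*-(f_p+f_p^*)>\tau$. Both steps occur infinitely often, and by Proposition~\ref{prop_inf_convergence} the quantities $f_k+f_k^*$ converge, being continuous (linear) functions of $\bm\alpha$. Passing to the limit along each subsequence gives $\overline{\Delta}_{pq}\geq\tau$ and $-\overline{\Delta}_{pq}\geq\tau$, which is a contradiction. The $\delta$-case is identical, using $\Delta f^*_{pq}>\tau$ versus $\Delta f^*_{qp}>\tau$ and continuity of $f_k^*$.

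\emph{Part (b)} uses the movement pattern in Lemma~\ref{lemma_fin_convergence}\ref{lemma_fin_convergence_limits}. Take the $\alpha$-case. After $\overline{k}$, each step on $(p,q)$ takes $(\alpha_p,\alpha_q)$ from $(\alpha_p,0)$ with $\alpha_p>0$ to $(0,\alpha_q^{\mathrm{new}})$ with $\alpha_q^{\mathrm{new}}>0$. Consider two consecutive iterations $k_1<k_2$ in $L_\alpha(p,q)$. Right after $k_1$ we have $\alpha_p=0$ and $\alpha_q>0$. Right before $k_2$ we need $\alpha_p>0$ and $\alpha_q=0$. So in between, some step must increase $\alpha_p$ and some step must decrease $\alpha_q$ to zero. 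In an $\alpha$-step on $(i,j)$ we have $t<0$ by Lemma~\ref{lemma_1}\ref{lemma_1_ts_sign}, so $\alpha_i$ decreases and $\alpha_j$ increases. Hence:
\begin{itemize}
\item $\alpha_p$ can only be increased by a step with $j(k)=p$, i.e.\ a pair $(r,p)$;
\item $\alpha_q$ can only be decreased by a step with $i(k)=q$, i.e.\ a pair $(q,s)$.
\end{itemize}
$\delta$-steps do not change $\alpha$. This happens between every consecutive pair of iterations in the infinite set $L_\alpha(p,q)$. Since there are finitely many index pairs, pigeonhole gives some $(r,p)$ and some $(q,s)$ selected infinitely often. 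We must also check $r\neq q$ and $s\neq p$; both follow from part~(a), since $(q,p)\notin I_\alpha^\infty$. Finite-multiplicity pairs are excluded by taking $\overline{k}$ past their last occurrence.

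For the $\delta$-case, Lemma~\ref{lemma_1} gives $s>0$, so a $\delta$-step on $(m,n)$ increases $\delta_m$ and decreases $\delta_n$. In the first branch of Lemma~\ref{lemma_fin_convergence}\ref{lemma_fin_convergence_limits}, steps on $(p,q)$ go from $(0,\delta_q)$ to $(\delta_p^{\mathrm{new}},0)$. Between consecutive occurrences, $\delta_p$ must return to $0$, which requires a pair $(r,p)$ decreasing it. Likewise $\delta_q$ must become positive again, which requires a pair $(q,s)$ increasing it. The second branch, converging to $(1,1)$, is symmetric: $\delta_p$ must drop below $1$ via some $(r,p)$, and $\delta_q$ must rise to $1$ via some $(q,s)$. $\alpha$-steps leave $\delta$ unchanged.

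The main obstacle I expect is part~(a): making sure the limiting strict inequality survives, i.e.\ that violations $>\tau$ persist to $\geq\tau$ in the limit and so give a genuine contradiction.
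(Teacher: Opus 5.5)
Your proposal is correct and follows the paper's proof step for step. Part (a) passes the opposite-sign violations $>\tau$ to the limit, using Proposition~\ref{prop_inf_convergence} and continuity of $f_k, f_k^*$; part (b) uses the edge-to-edge movement pattern of Lemma~\ref{lemma_fin_convergence}\ref{lemma_fin_convergence_limits} to force pairs $(r,p)$ and $(q,s)$ that restore the configuration, with $r\neq q$, $s\neq p$ coming from (a). Your use of the signs $t<0$, $s>0$ from Lemma~\ref{lemma_1} to pin down which index position can raise or lower a coefficient, together with the pigeonhole step, makes explicit what the paper's proof only asserts.
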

To return on the edge from which the pair $(p, q)$ has moved,
$p$ and $q$ need to form pairs with other examples at some of the subsequent iterations.
Studying this leads to \ref{lemma_fin_convergence_rs}
of Lemma~\ref{lemma_fin_convergence2},
which {ultimately comes up with a desired} contradiction in the proof of Theorem~\ref{theorem_fin_convergence}.
\section{Experiments} \label{sec_experiments}
\subsection{Synthetic data}
Let's start by demonstrating how OC-SVM+ works with simple synthetic data.
We use a random sample from two-dimensional standard normal distribution and two outliers $(-2.5, 0)$ and $(0, 2.5)$. 
For the data examples to the right from the origin we generate privileged feature that is zero-mean normal variate with standard deviation $0.1$, whereas for the examples to the left we generate values uniformly on $(-100, -10)\cup(10, 100)$.
Since privileged feature characterizes left examples as anomalies, the domain is shifted to the right (Fig.~\ref{fig:expr1_ocsvm_plus}).
This shows that knowledge about anomalies is transferred from the privileged feature to the domain of the original (decision) features. As a result, many left examples that are not detected as anomalies without privileged information also become anomalies, i.e. they are outside the domain boundary.

Fig.~\ref{fig:expr1_ocsvm_plus}, \ref{fig:expr1_ocsvm_plus_gamma}, and \ref{fig:expr1_ocsvm_plus_nu} 
illustrate Proposition~\ref{proposition1} as follows.
\ref{prop1_d1} means that all domain-outsiders are orange in color (with or without red squares, i.e. are or are not support vectors).
\ref{prop1_3cond} means that green examples inside red squares can be only on-bound.
Also, on-bound support vectors are green, and on-bound green examples are the support vectors (some non-green support vectors lie near the boundary but not on it).
\ref{prop1_nu_bound} means that the fraction of green and orange examples is not less that $\nu$,
and the fraction of orange examples is not greater than $\nu$ (the latter is not so obvious in the drawing since all the orange examples are drawn on top of all the others).
\ref{prop1_nu_bound_anomaly} means that the fraction of domain-outsiders is not greater than $\nu$.

The influence of privileged information is controlled by two hyperparameters: the regularization $\gamma$ and the privileged kernel parameter $\sigma^*$.
As expected, the lower these values, the stronger the effect (Fig.~\ref{fig:expr1_recall}).
Note, that when weakening the influence of privileged information, i.e. increasing the regularization $\gamma$, the domain remains almost unmoved (Fig.~\ref{fig:expr1_ocsvm_plus_gamma}), but different from the domain {of simple OC-SVM without privileged information} (Fig.~\ref{fig:expr1_ocsvm}). 
The latter demonstrates the fact that, although the privileged term in \eqref{dual_problem} vanishes for large $\gamma$, the problem does not have OC-SVM's condition $0\leq\alpha_k\leq 1$.
Similar to OC-SVM {without privileged information}, the size of the domain area decreases and the two outliers become anomalies if we increase $\nu$ (Fig.~\ref{fig:expr1_ocsvm_nu}, \ref{fig:expr1_ocsvm_plus_nu}).
\subsection{Performance on real data}
\begin{figure*}[t]
\centering
\begin{subfigure}{0.33\textwidth}
  \includegraphics[width=\linewidth]{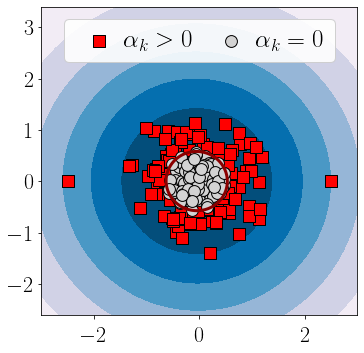}
  \caption{\ }
  \label{fig:expr1_ocsvm}
\end{subfigure}\hfil
\begin{subfigure}{0.33\textwidth}
  \includegraphics[width=\linewidth]{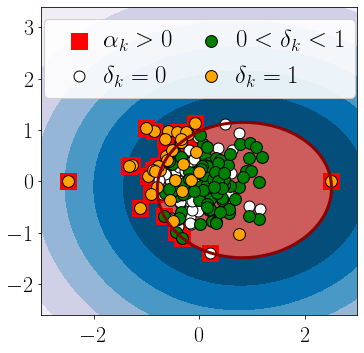} 
  \caption{\ }
  \label{fig:expr1_ocsvm_plus}
\end{subfigure}\hfil
\begin{subfigure}{0.33\textwidth}
  \includegraphics[width=\linewidth]{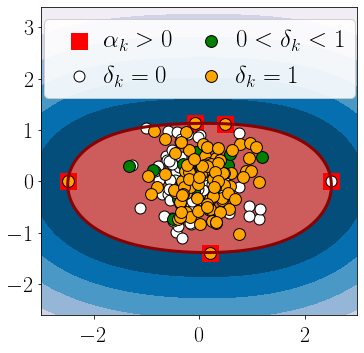}
  \caption{\ }
  \label{fig:expr1_ocsvm_plus_gamma}
\end{subfigure}
\\
\begin{subfigure}{0.33\textwidth}
  \includegraphics[width=\linewidth]{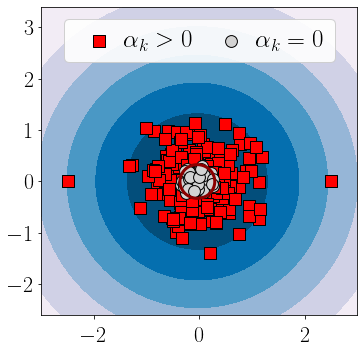}
  \caption{\ }
  \label{fig:expr1_ocsvm_nu}
\end{subfigure}\hfil
\begin{subfigure}{0.33\textwidth}
  \includegraphics[width=\linewidth]{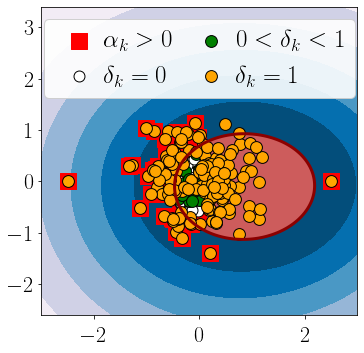}
  \caption{\ }
  \label{fig:expr1_ocsvm_plus_nu}
\end{subfigure}\hfil
\begin{subfigure}{0.33\textwidth}
  \includegraphics[width=\linewidth]{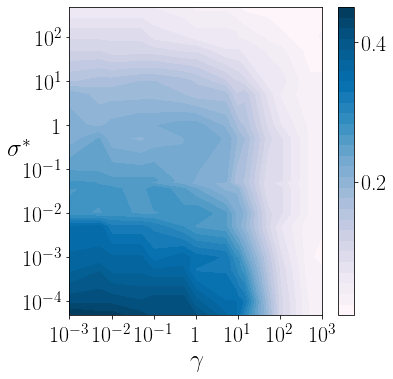}
  \caption{\ }
  \label{fig:expr1_recall}
\end{subfigure}
\caption{
\eqref{fig:expr1_ocsvm} -- OC-SVM,  $\nu=0.5$; 
\eqref{fig:expr1_ocsvm_plus} -- OC-SVM+, $\nu=0.5$, $\gamma=10^{-3}/{\nu l}$;
\eqref{fig:expr1_ocsvm_plus_gamma} -- OC-SVM+, $\nu=0.5$, $\gamma=10^3/{\nu l}$;
\eqref{fig:expr1_ocsvm_nu} -- OC-SVM, $\nu=0.8$;
\eqref{fig:expr1_ocsvm_plus_nu} -- OC-SVM+, $\nu=0.8$, $\gamma=10^{-3}/{\nu l}$;
\eqref{fig:expr1_recall} -- Anomaly detection recall (fraction of anomalies among all points to the left of the origin) depending on privileged hyperparameters.}
\end{figure*}
In the next experiment, we use the Statlog (Shuttle) data set from the Outlier Detection DataSets library (Shebuti~\cite{Rayana2016ODDS}).
The original version of Shuttle data is in the UCI machine learning repository (Dua and Graff~\cite{Dua2019UCI}). 
The data set consists of 49097 feature vectors of dimension 9 and anomaly label.
Features 0, 6, 7, and 8, most relevant to the anomaly flag, are selected as privileged information, features 2 and 4 are the original, and features 1, 3 and 5, the least associated with anomaly, are discarded.
All features are normalized using a standard scaler, i.e. we subtract the mean and divide by the standard deviation.
Then, we prepare {ten} train-test splits, 
such that training sets do not overlap with each other, 
and each of them has $10\%$  anomalies.

OC-SVM+ is trained on original and privileged features and evaluated on only original features.
In addition, two simple OC-SVMs without privileged information are used for comparison: one using only the original features and the other using all features.  
The outputs of the decision functions for the two OC-SVMs and OC-SVM+ are compared with the anomaly labels using average precision score (Zhu~\cite{zhu2004recall}).
This metric approximates the area under precision-recall curve,
which is a typical metric choice due to data imbalance in anomaly detection.
Given a fixed $\sigma$ of the original kernel and tolerance $\tau=10^{-3}$, other hyperparameters including $\nu$ (in all models), as well as $\gamma$ and $\sigma^*$ are optimized with cross-validation and grid search method.
The regions for searching for hyperparameters are chosen large enough so that their optimal values are within region bounds.
As expected, the optimal score for OC-SVM+ is between the optimal scores for OC-SVMs, noticeably outperforming OC-SVM trained on only original features (Fig.~\ref{fig:shuttle}).
\begin{figure}[t]
  \centering
  \includegraphics[width=\linewidth]{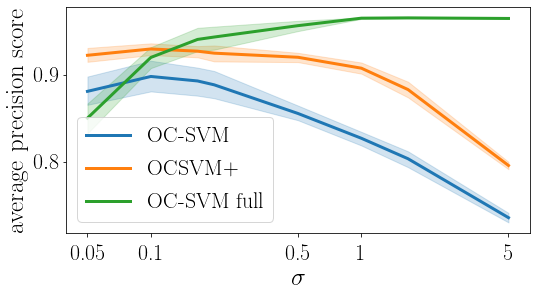}
  \caption{Anomaly detection average precision score with 0.95 confidence interval: OC-SVM with original features, OC-SVM+ and OC-SVM with full feature set.}
  \label{fig:shuttle}  
\end{figure}
\subsection{Speed tests on real data}
\begin{figure*}[h]
  \centering
  \includegraphics[width=\linewidth]{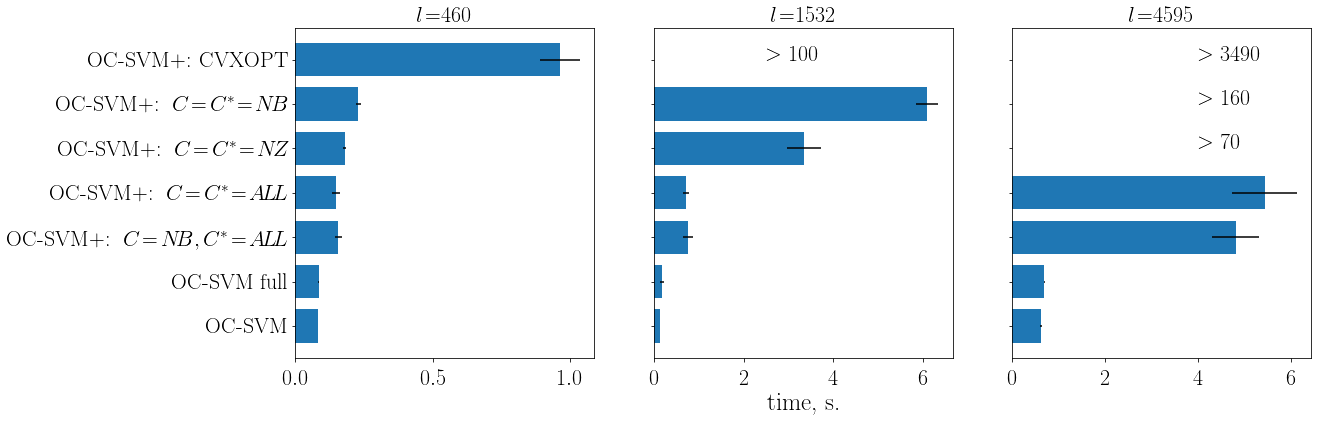}
  \caption{Comparison of processing times of OC-SVMs (original features and all features), OC-SVM+ via SMO with different caching, and OC-SVM+ via CVXOPT.}
  \label{fig:shuttle_time}  
\end{figure*}
In the speed comparison experiments we run our SMO for OC-SVM+ versus OC-SVMs with original and full feature sets, as well as versus {offline (batch)} quadratic optimization solver, applied directly to \eqref{dual_problem}.
For the latter, we use the package CVXOPT (Vandenberghe~\cite{Vandenberghe2010CVXOPT}), based on interior-point methods for large-scale cone programming (Andersen et al.~\cite{Andersen2011Interior}).
Also, we compare the performance of SMO for OC-SVM+ with different caching options $C$ and $C^*$.
We use the optimal hyperparameters $\nu$, $\sigma$, $\gamma$, and $\sigma^*$ found after tuning OC-SVM+ using the grid search method from the experiment above.
All OC-SVMs+ and two OC-SVMs have the same $\nu$ and $\sigma$.
We train every model 128 times for ten different reduced sample sizes $l$.
The means and standard deviations of model training times for the first three sample sizes are shown in Fig.~\ref{fig:shuttle_time}.
For the extent of privileged regularization to be the same for all sample sizes, i.e. when the ratio between original coefficient $\nu l$ and privileged $\gamma$ in \eqref{dual_problem} does not depend on $l$, we set $\gamma=\gamma_0 l$, where $\gamma_0$ is the same for all reduced $l$.
$\gamma_0$ is found as $\gamma/l$ for the optimal $\gamma$ and the size $l$ taken from the experiment above.
In all experiments, the kernel caches are large enough that every of 
$K(x_i, x_j)$, $K^*(x_i^*, x_j^*)$ and $K_{ij}$, $K_{ij}^*$
is calculated only once. 
For every sample size $l$ and every pair of models 
we compare the training times using Mann-Whitney test.
The choice of this nonparametric test instead of ANOVA is due to the fact that different 128-element groups have different standard deviations and the distributions are not close to normal.

The test showed that starting from the third sample size ($l=4595$, right in Fig.~\ref{fig:shuttle_time}) each model has a statistically larger mean training time than all models depicted below it (all p\_values are less than even $10^{-8}$).
So, SMO essentially outperforms the existing CVXOPT especially when the sample size $l$ grows. 
Also, the experiments show that the performance significantly depends on the choice of caching heuristics $C$ and $C^*$. 
It follows from the results shown that, generally, frequent updates of most of $f_k$, $f_k^*$ due to large caches $C=\mathit{NB}$, $C^*=\mathit{ALL}$ or $C=C^*=\mathit{ALL}$ work faster than occasional recalculations when the caches are small as, for example, cover only non-zero coefficients $C=C^*=\mathit{NZ}$. 
This is especially true when $l$ is large.
Although, it seems reasonable to allow recalculations of some $f_k$  by setting $C=\mathit{NB}$, $C^*=\mathit{ALL}$, since this can demonstrate  slightly better {(but statistically significant)} results than updating all of $f_k$, $f_k^*$ at each iteration, i.e. $C=C^*=\mathit{ALL}$.
Therefore, when the sample size becomes large enough the picture stabilizes and $C=\mathit{NB}$, $C^*=\mathit{ALL}$ becomes the best caching.
Although, no-shrinking option $C=C^*=\mathit{ALL}$ is only slightly behind.
The latter is, to some extent, in tune with training without shrinking (Chang and Lin~\cite{Chang2011LIBSVM}) if kernel caches are not full or $\tau$ is large. 
SMO algorithms for {Support Vector Machines} without privileged information in 
Platt~\cite{platt1998fast}, Keerthi et al.~\cite{keerthi2001improvements}, and Sch{\"o}lkopf et al.~\cite{scholkopf2001estimating}, in fact, use the heuristic $C=\mathit{NB}$, where $\mathit{NB} = \{k~|~0<\alpha_k<Const\}$. 
However, it is also worth noting that a natural generalization of this approach to $C=C^*=\mathit{NB}$, $\mathit{NB} = \{k~|~\alpha_k>0\ \mathrm{or}\ 0<\delta_k<1\}$, in OC-SVM+ will not be a good choice of caching rule.
It is even slower than $C=C^*=\mathit{NZ}$ ({Fig.}~\ref{fig:shuttle_time}), because smaller caches, 
$\mathit{NB}\subseteq\mathit{NZ}$, lead to larger number of out-of-cache recalculations. 
\section{Discussion and future work} \label{sec_discussion}
We have achieved the main goal of the study --- SMO algorithm for the OC-SVM model is significantly faster than the universal solver of the quadratic programming problem with given constraints.
It was also expected that the $\tau$-approximate solution is reached in a finite number of iterations.
The proof is rather complicated, but is fully given with all details.
In the future, it would be interesting to improve our method
by selection of violating pair with second-order approximations (Fan et al.~\cite{fan2005working}, Glasmachers et al.~\cite{glasmachers2006maximum}) and 
by the use of conjugate directions similar to Pechyony and Vapnik~\cite{pechyony2011fast}.
We are also working on a more detailed analysis of caching (and shrinking), in particular
the relationship between time performance and shrinking with different tolerances $\tau$.
Of particular interest is the  application of $\kappa\tau$-methodology of Bordes et al.~\cite{bordes2005fast} in order to establish the convergence of the $\kappa\tau$-optimal solution to the theoretical solution of the problem \eqref{dual_problem}.

The performance of the algorithms also depends on their implementation, i.e., for example, an implementation in C++ or Cython is faster than in Python. 
Therefore, to correctly compare the performance of different algorithms, it is preferable to have either low-level implementations of both, or implementations in the same environment (for example, Python or R). 
The degree of optimization of the program code also matters. 
Perhaps for this reason, the implementations of aSMO and gSMO algorithms in Pechyony et al.~\cite{pechyony2010smo} are not faster in time than the offline solver, although they gain significantly in complexity because they are online methods. 
At the same time, our implementation of OC-SVM+ in Cython allowed comparison even with the professional LIBSVM.
As expected, our SMO for OC-SVM+ takes longer to converge than SMO for OC-SVM without privileged information taken from the LIBSVM library for comparison.
Also, in our implementation of SMO all at-bound coefficients $\alpha_k=0$, $\delta_k=0, 1$,
are tracked using special vectors of boolean labels to avoid floating point number comparison.

As we noted, the OC-SVM+ model has already been formulated and used in other works. 
Our main focus has been on its fast training method rather than its properties (although we present Proposition~\ref{proposition1} and the first two experiments that do not characterize the SMO algorithm).
We have paid considerable attention to SMO's caching policies (rules).
We have shown that the choice of caching can strongly affect performance, and that the variants $C=C^*=\mathit{NB}$ and $C=C^*=\mathit{NZ}$ obtained by a simple generalization of the caching heuristics used in SMO for supervised SVM can actually be the slowest. 
An experiment with a single dataset is sufficient for this conclusion.
However, we cannot say with certainty that caching $C=\mathit{NB}$, $C^*=\mathit{ALL}$ is the best in most cases. For this purpose, experiments with a large number of datasets and different number of privileged features and anomalies in their values are planned in the future. It is also necessary to compare caching rules for large enough datasets when values $K(x_i, x_j)$, $K^*(x_i^*, x_j^*)$, and $K_{ij}$, $K_{ij}^*$ are recalculated due to impossibility to store everything in memory. Such larger benchmark will also allow us to compare the OC-SVM+ model itself with other anomaly detection tools for privileged information.
\section{Conclusion} \label{sec_conclusion}
We have incorporated LUPI paradigm into well known {one-class} SVM ($\nu$-SVM) model for anomaly detection and developed the Sequential Minimal Optimization method for it.
Our work is the full research pipeline: problem statement, studying the properties of the solution of the optimization problem, optimization algorithm, proof of convergence, as well as experiments both demonstrating and benchmark tests.

The unsupervised one-class SVM model with privileged features is a universal tool for anomaly detection, i.e. it can handle any tabular data without missing values. The model works with any given kernel.
The proposed SMO algorithm has allowed training this model much faster than existing methods.
The OC-SVM+ model provides a good baseline against which to compare future anomaly detection methods under privileged features.
\begin{appendices}
\section{{Abbreviations}} \label{appendix0}
\begin{table}[h!]
\caption{{\textbf{List of abbreviations}}}
\label{table1}
\renewcommand{\arraystretch}{1.5}
\centering
\begin{tabular}{|p{0.3\linewidth}|p{0.6\linewidth}|}
\hline
Abbreviattion& Definition\\
\hline
CVXOPT&Software package for Convex Optimization methods\\
gSMO&Generalized Sequential Minimal Optimization algorithm\\
KKT&Karush-Kuhn-Tucker conditions\\
LIBSVM&Software library for the family of Support Vector Machine algorithms\\
LUPI&Learning Using Privileged Information paradigm\\
OC-SVM ($\nu$-SVM) &One-class Support Vector Machine algorithm\\
OC-SVM+&One-class Support Vector Machine with privileged information algorithm\\
OCSVM-PLUS&Software library for one-class Support Vector Machine with privileged information algorithm\\
PI& Privileged Information\\
SMO&Sequential Minimal Optimization algorithm\\
SSVM& Structured Support Vector Machine algorithm\\
SVDD&Support Vector Domain Description model\\
SVM&Support Vector Machine algorithm\\
SVM+&Support Vector Machine with privileged information algorithm\\
\hline
\end{tabular}
\end{table}
\section{Dual problem formulation} \label{appendix1}
The problem \eqref{primal_problem} has the Lagrangian with the coefficients $\alpha_k \geq 0$, $\beta_k \geq 0$, $\mu_k \geq 0$,
\begin{IEEEeqnarray*}{rCl}
    L &=& \frac{\nu l}{2} \|w\|^2 + \frac{\gamma}{2} \|w^*\|^2 - \nu l \rho  \\
    &&+ \sum_k \left[ (w^*\cdot\phi^*(x_k^*)) + b^* + \zeta_k \right] \\
    &&-\sum_k \alpha_k \left[ (w\cdot\phi(x_k))  - \rho + (w^*\cdot\phi^*(x_k^*)) + b^*\right]\\
    &&-\sum_k \beta_k \left[ (w^*\cdot\phi^*(x_k^*)) + b^* + \zeta_k \right]\\
    &&-\sum_k \mu_k \zeta_k.
\end{IEEEeqnarray*}
Setting partial derivatives with respect to the primal variables $w$, $w^*$, $\rho$, $b^*$, $\zeta$ to zero
and introducing $\delta_k = 1-\beta_k \leq 1$, we obtain
\begin{IEEEeqnarray}{rClCl} 
\frac{\partial L}{\partial w} &=&\IEEEeqnarraymulticol{3}{l}{\nu l w -\sum_k \alpha_k\phi(x_k) = 0} \nonumber \\
 &&\IEEEeqnarraymulticol{3}{l}{\qquad \Rightarrow \quad w = \frac{1}{\nu l } \sum_k\alpha_k\phi(x_k),} \label{w_opt}\\
\frac{\partial L}{\partial w^*}  &=& \IEEEeqnarraymulticol{3}{l}{\gamma w^* - \sum_k (\alpha_k - \beta_k + 1)\phi^*(x_k^*) = 0} \nonumber \\
 &&\IEEEeqnarraymulticol{3}{l}{\qquad \Rightarrow \quad w^* = \frac{1}{\gamma}\sum_k (\alpha_k - \delta_k) \phi^*(x_k^*),}  \label{w_star_opt}\\
\frac{\partial L}{\partial \rho} &=& -\nu l + \sum_k\alpha_k = 0 & \Rightarrow \quad & \sum_k \alpha_k = \nu l, \label{alpha_opt}\\
\frac{\partial L}{\partial b^*} &=& \sum_k 1 - \beta_k - \alpha_k = 0 \nonumber \\
 &&\IEEEeqnarraymulticol{3}{l}{\qquad \Rightarrow \quad \sum_k \delta_k = \sum_k\alpha_k,} \label{delta_opt} \\
\frac{\partial L}{\partial \zeta_k}  &=& 1 - \beta_k - \mu_k = 0 & \Rightarrow \quad & 0 \leq \delta_k \leq 1. \label{delta_opt_limint}
\end{IEEEeqnarray}

Using \eqref{alpha_opt} and substitute the optimal $w$ according to \eqref{w_opt}, the original terms in Lagrangian reduce to 
\begin{IEEEeqnarray*}{rCl}
\IEEEeqnarraymulticol{3}{l}{\frac{\nu l}{2} \|w\|^2 - \nu l \rho  - \sum_k \alpha_k \left[(w\cdot\phi(x_k))  - \rho \right]} \\
&\qquad=& \frac{\nu l}{2} \|w\|^2  - \sum_k \alpha_k (w\cdot\phi(x_k)) \\
&\qquad=& - \frac{1}{2\nu l}\sum_{k, m}\alpha_k\alpha_m (\phi(x_k)\cdot\phi(x_m)).
\end{IEEEeqnarray*}
For the remaining privileged Lagrangian terms we use partial derivative expressions from  \eqref{delta_opt_limint}, 
then from \eqref{delta_opt}, and, finally, substitute the solution \eqref{w_star_opt} for $w^*$:
\begin{IEEEeqnarray*}{rClCl}
    \IEEEeqnarraymulticol{3}{l}{\frac{\gamma}{2} \|w^*\|^2}
    &+& \sum_k [(w^*\cdot\phi^*(x_k^*)) + b^*] (1-\alpha_k-\beta_k) \\
    &&&+&\sum_k \zeta_k (1-\beta_k -\mu_k) \\
    &\qquad=& \IEEEeqnarraymulticol{3}{l}{\frac{\gamma}{2} \|w^*\|^2 + \sum_k (w^*\cdot\phi^*(x_k^*))   (1-\alpha_k-\beta_k)} \\
    &\qquad=& \IEEEeqnarraymulticol{3}{l}{-\frac{1}{2\gamma}\sum_{k, m}(\alpha_k - \delta_k)(\alpha_m - \delta_m)(\phi^*(x_k^*)\cdot\phi^*(x_m^*)).}
\end{IEEEeqnarray*}
The sum of these original and privileged parts yields \eqref{dual_problem}.

In addition, lets write down Karush-Kuhn-Tucker conditions:
\begin{align}
  \alpha_k (f(x_k) + f^*(x_k^*)) & =0, \label{KKT_1} \\ 
  \beta_k (f^*(x_k^*) + \zeta_k) & = 0, \label{KKT_2} \\
  \mu_k \zeta_k & = 0. \label{KKT_3}
\end{align} 
From \eqref{KKT_1} we obtain \eqref{alpha_0} and \eqref{alpha_g0}.
When $\delta_k > 0$, then, according to \eqref{delta_opt_limint}, $\mu_k=\delta_k > 0$ and, thus, $\zeta_k=0$  \eqref{KKT_3}.
If $\delta_k < 1$, then $\beta_k >0$, and from \eqref{KKT_2} we have 
$f^*(x_k^*) + \zeta_k = 0$.
Thus, for $0<\delta_k<1$ we have \eqref{delta_01}.
If $\delta_k = 0$, then $\beta_k=1$, $\mu_k=0$, so from \eqref{KKT_3} it follows that $\zeta_k \geq 0$, and, according to \eqref{KKT_2}, 
we obtain $f^*(x_k^*) = -\zeta_k \leq 0$, i.e. \eqref{delta_0}.
If $\delta_k = 1$, then $\beta_k=0$, $\mu_k=1$, and so, from \eqref{KKT_2} and \eqref{KKT_3}, we have $\zeta_k=0$ and $f^*(x_k^*) \geq 0$, i.e. \eqref{delta_1}.
\section{Two-variable optimization problem} \label{appendix2}
The function \eqref{gsmo_objective} follows from applying the updates 
\eqref{gsmo_update} to the objective in the dual problem \eqref{dual_problem}. 
First, consider the original part.
Separating the terms with two chosen indices $i$ and $j$ and using the updates 
$\alpha_i \leftarrow \alpha_i + t$ and $\alpha_j \leftarrow \alpha_j - t$, we have
\begin{IEEEeqnarray*}{rCl}
  \IEEEeqnarraymulticol{3}{l}{\sum_{k, m}\alpha_k\alpha_m K(x_k, x_m) = (\alpha_i + t)^2 K(x_i, x_i)} \\
  &\>+& 2(\alpha_i + t)(\alpha_j - t) K(x_i, x_j) + (\alpha_j - t)^2 K(x_j, x_j) \\
  &\>+& 2(\alpha_i + t) \sum_{k\notin \{i, j\}}\alpha_k K(x_i, x_k) \\
  &\>+& 2 (\alpha_j - t)\sum_{k\notin \{i, j\}}\alpha_k K(x_j, x_k) \\
  &\>+& \sum_{k, m \notin \{i, j\}}\alpha_k\alpha_m K(x_k, x_m).
\end{IEEEeqnarray*}
After grouping the quadratic $t^2$ and linear $t$ terms it is reduced to  
\begin{multline*}
  t^2\left( K(x_i, x_i) - 2K(x_i, x_j) + K(x_j, x_j) \right)  \\
  +2t \left( \sum_{k}\alpha_k K(x_i, x_k) - \sum_{k}\alpha_k K(x_j, x_k) \right) + const.
\end{multline*}

Regarding privileged part, the updates become
$\alpha_i - \delta_i \leftarrow \alpha_i - \delta_i + t-s$, and 
$\alpha_j - \delta_j \leftarrow \alpha_j - \delta_j - (t-s)$,
and, thus, the quadratic form is similar up to substitution
$\alpha-\delta$, $t-s$, $K^*$ and $x^*$ instead of $\alpha$, $t$, $K$ and $x$ respectively.
\section{Proofs} \label{appendix3}
\begin{proof}[Proof of Lemma~\ref{lemma_1}]
To prove \ref{lemma_1_KK_pos}, let's assume $K_{ij}+K_{ij}^*=0$ in the case \ref{lemma_1_alpha} and $K_{ij}^*=0$ in the case \ref{lemma_1_delta}.
Since, 
$K_{ij} = \|\phi(x_i) - \phi(x_j)\|^2_{\ell_2}/\nu l$ and 
$K_{ij}^* = \|\phi^*(x_i^*) - \phi^*(x_j^*)\|^2_{\ell_2}/\gamma$, then
$\phi(x_i) = \phi(x_j)$, $\phi^*(x_i^*) = \phi^*(x_j^*)$ in the case \ref{lemma_1_alpha} and 
$\phi^*(x_i^*) = \phi^*(x_j^*)$ in the case \ref{lemma_1_delta}.
From \eqref{f_k} and \eqref{f_k_star} we have
\begin{equation*}
\begin{IEEEeqnarraybox}{rCl} 
    \Delta f_{ij}   &=& \frac{1}{\nu l} \sum\limits_{k=1}^l\alpha_k \langle \phi(x_k) \cdot (\phi(x_i)-\phi(x_j))\rangle, \\
    \Delta f_{ij}^* &=& \frac{1}{\gamma}\sum\limits_{k=1}^l(\alpha_k - \delta_k) \langle\phi^*(x_k^*) \cdot (\phi^*(x_i^*)-\phi^*(x_j^*))\rangle. 
\end{IEEEeqnarraybox}
\end{equation*}
Therefore, in our two cases we obtain, respectively, $\Delta f_{ij} = \Delta f_{ij}^* = 0$ and $\Delta f_{ij}^* = 0$,
which is wrong, due to violations $\Delta f_{ij} + \Delta f_{ij}^* > 0$ and $\Delta f_{ij}^* > 0$. 
The part \ref{lemma_1_KK_pos} is proved.

To prove \ref{lemma_ts_opt} and \ref{lemma_1_ts_sign} let's first consider the case \ref{lemma_1_alpha}.
Since the $\alpha$-pair is violating, 
then $\Delta f_{ij} + \Delta f_{ij}^* > 0$.
Then the minimum of \eqref{gsmo_objective_t} is either the extremum 
$t=-(\Delta f_{ij} + \Delta f_{ij}^*)/(K_{ij}+K_{ij}^*)<0$ or the bound point $t=-\alpha_i<0$.
The case \ref{lemma_1_delta} is similar. 
We have $\Delta f_{ij}^* > 0$.
Then the minimum is either the extremum 
$s=\Delta f_{ij}^*/K_{ij}^*>0$ or the bound $s=\min{(1-\delta_i, \delta_j)}>0$.
The parts \ref{lemma_ts_opt} and \ref{lemma_1_ts_sign} are proved.

In the case \ref{lemma_1_alpha}, after a substitution of $t=-(\Delta f_{ij} + \Delta f_{ij}^*)/(K_{ij}+K_{ij}^*)$, $s=0$ into \eqref{delta_f_update}, we have $\Delta f_{ij}+\Delta f_{ij}^*\leftarrow 0$,
and, thus, \eqref{KKT_a} is satisfied.
Similarly, in the case \ref{lemma_1_delta}, substituting $t=0$, $s=\Delta f_{ij}^*/K_{ij}^*$ into \eqref{delta_f_update} we have $\Delta f_{ij}^*\leftarrow 0$, and, thus, \eqref{KKT_d} is satisfied.
The part \ref{lemma_1_Deltas} is proved.
For on-bound points we have $\alpha_i \leftarrow 0$ in the case \ref{lemma_1_alpha} and $\delta_i \leftarrow 1$ or $\delta_j \leftarrow 0$ in the case \ref{lemma_1_delta}. 
So, in both cases the pairs become not a $\tau$-violating.
The part \ref{lemma_1_not_violating} is proved.
\end{proof}
\begin{proof}[Proof of Lemma~\ref{lemma_2}]
Every iteration in the algorithm updates 
a $\tau$-violating pair $\alpha_i^\mathrm{new}=\alpha_i+t_\mathrm{opt}$, $\alpha_j^\mathrm{new}=\alpha_j-t_\mathrm{opt}$ or $\delta_i^\mathrm{new}=\delta_i+s_\mathrm{opt}$, $\delta_j^\mathrm{new}=\delta_j-s_\mathrm{opt}$, 
where $t_\mathrm{opt}$ and $s_\mathrm{opt}$ are the optimal solutions of \eqref{gsmo_objective_t} and \eqref{gsmo_objective_s}, respectively.
In the former case we use the fact that 
if $a>0$, $b>0$, then $-at^2-2bt \geq -bt$ for $-b/a \leq t \leq 0$.
It means that this parabola lies above the line joining the origin and the maximum point $(-b/a, b^2/a)$.
So, given that 
$\Delta f_{ij} + \Delta f_{ij}^* > \tau$ and  
$||\bm\alpha_\mathrm{new} - \bm\alpha||=
\sqrt{(\alpha_i^\mathrm{new}-\alpha_i)^2+(\alpha_j^\mathrm{new}-\alpha_j)^2}=
-t_\mathrm{opt}\sqrt{2}$, we obtain
\begin{multline*}
    F(\bm\alpha) - F(\bm\alpha_\mathrm{new}) = -\varphi(t_\mathrm{opt}) \geq -(\Delta f_{ij} + \Delta f_{ij}^*)t_\mathrm{opt} \\ 
    > \tau(-t_\mathrm{opt}) = \frac{\tau}{\sqrt{2}}||\bm\alpha_\mathrm{new}-\bm\alpha||.
\end{multline*}

Similarly, in the case 
$\delta_i^\mathrm{new} = \delta_i+s_\mathrm{opt}$, 
$\delta_j^\mathrm{new} = \delta_j-s_\mathrm{opt}$
we use that 
if $a>0$, $b>0$, then $-as^2+2bs \geq bs$ for $0 \leq s \leq b/a$.
By $\Delta f_{ij}^* > \tau$ and 
$||\bm\alpha_\mathrm{new} - \bm\alpha||=
\sqrt{(\delta_i^\mathrm{new}-\delta_i)^2+(\delta_j^\mathrm{new}-\delta_j)^2}=
s_\mathrm{opt}\sqrt{2}$,
\begin{multline*}
    F(\bm\alpha) - F(\bm\alpha_\mathrm{new}) = -\varphi^*(s_\mathrm{opt}) \geq \Delta f_{ij}^*\>s_\mathrm{opt} > \tau s_\mathrm{opt} \\ = \frac{\tau}{\sqrt{2}}||\bm\alpha_\mathrm{new}-\bm\alpha||.
\end{multline*}
\end{proof}
\begin{proof}[Proof of Proposition~\ref{prop_inf_convergence}]
The constraints in \eqref{dual_problem} provide a feasible set $\mathcal{F}$
which is compact,
because for $\bm\alpha\in \mathcal{F}$ the inequalities  $0\leq\alpha_k\leq \nu l$, $0\leq\delta_k\leq \nu l$ are nonstrict.
So, the objective $F(\bm\alpha)$ is bounded below on $\mathcal{F}$.  
Thus, since $\{F(\bm\alpha(i))\}$ is a decreasing sequence, there exists
$\overline{F}$ such that $\lim_{i\to\infty} F(\bm\alpha(i))=\overline{F}$.
By Lemma~\ref{lemma_2} we have
\begin{IEEEeqnarray*}[\IEEEeqnarraystrutmode\IEEEeqnarraystrutsizeadd{\IEEEspace}{\IEEEspace}]{r}
    F(\bm\alpha(i)) - F(\bm\alpha(i+1)) > \frac{\tau}{\sqrt{2}}||\bm\alpha(i) - \bm\alpha(i+1)||, \qquad \\ 
    i=0, 1, \ldots,\\
    \IEEEeqnarraymulticol{1}{c}{\vdots} \\
    \IEEEeqnarraymulticol{1}{l}{F(\bm\alpha(i+l-1)) - F(\bm\alpha(i+l))} \\ 
    \qquad > \frac{\tau}{\sqrt{2}}||\bm\alpha(i+l-1) - \bm\alpha(i+l)||, \quad i,l=0, 1, \ldots.  
\end{IEEEeqnarray*}
Summing up these inequalities and repeatedly applying triangle inequality,
we have
\begin{multline*}
    F(\bm\alpha(i)) - F(\bm\alpha(i+l)) > \frac{\tau}{\sqrt{2}}||\bm\alpha(i) - \bm\alpha(i+l)||, \\ i, l=0, 1, \ldots,
\end{multline*}
meaning that $\{\bm\alpha(i)\}$ is a Cauchy sequence.
Both the initialization and the iteration of the algorithm ensures that 
$\bm\alpha(i)\in \mathcal{F}$.
So, since $\mathcal{F}$ is compact, then $\{\bm\alpha(i)\}$ converges to a vector in
$\mathcal{F}$.
\end{proof}
\begin{proof}[Proof of part \ref{lemma_fin_convergence_to_int} of  Lemma~\ref{lemma_fin_convergence}]
Let's assume that $\alpha$-pair can move to the point
$(\alpha_p^\mathrm{new}, \alpha_q^\mathrm{new})$ such that 
$\alpha_p^\mathrm{new} > 0$ infinitely number of times.
Each time it happens, by \ref{lemma_1_Deltas} in Lemma~\ref{lemma_1} there is 
$f_p+f_p^*=f_q+f_q^*$.
Moreover, 
by the continuity of functions $f_i$, $f_i^*$, $i=1,\ldots,l$, with respect to ${\bm\alpha}$, according to Proposition~\ref{prop_inf_convergence}, there are limits $\overline{f}_i=\lim\limits_{k\to\infty}f_i$, $\overline{f}_i^*=\lim\limits_{k\to\infty}f_i^*$.
Therefore, $\forall~\epsilon>0$ there is an iteration such that at the all subsequent ones we will have 
$|f_p+f_p^*-(\overline{f}_p+\overline{f}_p^*)| < \epsilon$ and
$|f_q+f_q^*-(\overline{f}_q+\overline{f}_q^*)| < \epsilon$.
This leads to $\overline{f}_p+\overline{f}_p^* = \overline{f}_q+\overline{f}_q^*$
and $f_p+f_p^*-(f_q+f_q^*)<2\epsilon<\tau$, 
meaning that $(p, q)$ remains not a $\tau$-violating, which contradicts to 
$(p, q)\in I_{\alpha}^{\infty}$.

Similarly, for $\delta$-pair,
we have $\overline{f}_p^* = \overline{f}_q^*$ and
$f_p^*-f_q^*<2\epsilon<\tau$, which contradicts to 
$(p, q)\in I_{\delta}^{\infty}$.
\end{proof}
\begin{proof}[Proof of part \ref{lemma_fin_convergence_from_int} of  Lemma~\ref{lemma_fin_convergence}]
Let's first consider $\alpha$-pairs.
Let's assume that $\alpha_q>0$ can be infinitely many times.
According to \ref{lemma_fin_convergence_to_int},
the step is $(\alpha_p, \alpha_q)\to(0, \alpha_q^{\textrm{new}})$.
Each such step decreases the number of positive coefficients $\alpha_i$ by one.
Since the number of data examples $l$ is finite, 
then to increase the number of non-zero coefficients there must be  
infinitely many steps like $(\alpha_i, 0)\to(\alpha_i^{\textrm{new}}, \alpha_j^{\textrm{new}})$, $\alpha_i^{\textrm{new}}>0$.
For this there must be at least one pair $(r, s)$ such that $\alpha_r^{\textrm{new}}>0$ infinitely many times, which is impossible due to \ref{lemma_fin_convergence_to_int}.

Similarly, for $\delta$-pairs we assume that there can be infinitely many steps
$(\delta_p, \delta_q)\to(1, \delta_q^{\textrm{new}})$, $\delta_p>0$, and
$(\delta_p, \delta_q)\to(\delta_p^{\textrm{new}}, 0)$, $\delta_q<1$.
Each of them decreases the number of at-bound coefficients by one or two.
In the former there must be at least one pair $(r, s)$ such that 
$(\delta_m, 1)\to(\delta_m^{\textrm{new}}, \delta_n^{\textrm{new}})$, $\delta_n^{\textrm{new}}>0$, whereas in the latter
$(0, \delta_n) \to (\delta_m^{\textrm{new}}, \delta_n^{\textrm{new}})$,
$\delta_m^{\textrm{new}}<1$, infinitely many times, which is impossible due to \ref{lemma_fin_convergence_to_int}.
\end{proof}
\begin{proof}[Proof of part \ref{lemma_fin_convergence_limits} of 
Lemma~\ref{lemma_fin_convergence}]
Let's consider the following placements of the limit point
$(\overline{\alpha}_p, \overline{\alpha}_q)$:
1) $\mathrm{int}S_{\alpha}\cup{S_{\alpha}^{ne}}$;
2) $S_{\alpha}^{s}\cup{(\nu l, 0)}$;
3) $S_{\alpha}^{w}\cup{(0, \nu l)}$;
4) $(0, 0)$.
Similarly, for $(\overline{\delta}_p, \overline{\delta}_q)$ we consider:
1) $\mathrm{int}S_{\delta}$;
2) $S_{\delta}^{w}\cup{S_{\delta}^{n}}\cup{(0, 1)}$;
3) $S_{\delta}^{s}\cup{S_{\delta}^{e}}\cup{(1, 0)}$;
4) $(0, 0)\cup{(1, 1)}$.
Next, we show, that the variants 1)-3) are impossible in both cases.
Indeed, the cases 1) are impossible due to \ref{lemma_fin_convergence_to_int} and, at the same time, due to \ref{lemma_fin_convergence_from_int} (the steps move from and to not on-bound points).
2) are impossible by \ref{lemma_fin_convergence_to_int} (movements to not on-bound points); 
3) are impossible by \ref{lemma_fin_convergence_from_int} (movements from not on-bound points).
The only variants 4) are possible, and,
according to \ref{lemma_fin_convergence_to_int} and \ref{lemma_fin_convergence_from_int}, the steps can be only between bounds: $S_{\alpha}^s\to S_{\alpha}^w$ for $\alpha$-steps and $S_{\delta}^w\to S_{\delta}^s$ or $S_{\delta}^n\to S_{\delta}^e$ for $\delta$-steps.  
\end{proof}
\begin{proof}[Proof of part \ref{lemma_fin_convergence_qp} of  Lemma~\ref{lemma_fin_convergence2}]
First, let's consider $\alpha$-pairs.
Assume that both 
$(p, q)\in I_{\alpha}^{\infty}$ and 
$(q, p)\in I_{\alpha}^{\infty}$.
Then at the iterations from $L_{\alpha}(p, q)$ 
the pair $(p, q)$ is $\tau$-violating,
i.e. $f_p+f_p^*-(f_q+f_q^*)>\tau$ by \eqref{Deltas}.
Also, at the iterations from
$L_{\alpha}(q, p)$ we have $f_q+f_q^*-(f_p+f_p^*)>\tau$.
Since $f_i$ and $f_i^*$, $i=1,\ldots,l$, are continuous with respect to all coefficients in ${\bm\alpha}$, 
then by Proposition~\ref{prop_inf_convergence}  there are the limits 
$\overline{f}_i=\lim\limits_{k\to\infty}f_i$ and
$\overline{f}_i^*=\lim\limits_{k\to\infty}f_i^*$.
Thus, we have a contradiction:
$\overline{f}_p+\overline{f}_p^*-(\overline{f}_q+\overline{f}_q^*) \geq \tau$
and 
$\overline{f}_q+\overline{f}_q^*-(\overline{f}_p+\overline{f}_p^*) \geq \tau$.

Similarly, for $\delta$-pairs we assume that both 
$(p, q)\in I_{\delta}^{\infty}$ and 
$(q, p)\in I_{\delta}^{\infty}$,
which leads to a contradiction:
$\overline{f}_p^*-\overline{f}_q^* \geq \tau$
and 
$\overline{f}_q^*-\overline{f}_p^* \geq \tau$.
\end{proof}
\begin{proof}[Proof of part \ref{lemma_fin_convergence_rs} of 
Lemma~\ref{lemma_fin_convergence2}]
According to Lemma~\ref{lemma_fin_convergence}, after a sufficiently large number of iterations, $\alpha$-steps can only have the form $(\alpha_p, 0)\to(0, \alpha_q^{\mathrm{new}})$, 
whereas $\delta$-steps are like $(0, \delta_q)\to(\delta_p^{\mathrm{new}}, 0)$ or 
$(\delta_p, 1)\to(1, \delta_q^{\mathrm{new}})$.
For the $\alpha$-pair $(p, q)$ to move infinitely many times,
it must return back $S_{\alpha}^w\to S_{\alpha}^s$ ($S_{\delta}^s \to S_{\delta}^w$ or $S_{\delta}^e \to S_{\delta}^n$).
Direct return is impossible due to \ref{lemma_fin_convergence_qp}.
Therefore, there must be an example $r \neq p$ such that
the pair $(r, p)$, is $\tau$-violating infinitely many times and
the step $(\alpha_r, 0) \to (0, \alpha_p^\mathrm{new})$, $\alpha_r>0$, $\alpha_p=0$, 
makes $\alpha_p>0$ again.
Moreover, there must be an example $s \neq q$ to move the pair $(q, s)$, 
by $(\alpha_q, 0)\to (0, \alpha_s^\mathrm{new})$, 
$\alpha_q>0$, $\alpha_s=0$, making $\alpha_q=0$ again.

Similarly, 
for $\delta$-pairs in the case of steps near the limit $(0, 0)$ we have
$(0, \delta_p) \to (\delta_r^\mathrm{new}, 0)$, $\delta_r=0$, $\delta_p>0$, and 
$(0, \delta_s) \to (\delta_q^\mathrm{new}, 0)$, $\delta_q=0$, $\delta_s>0$.
In the case of steps near $(1, 1)$ we have
$(\delta_r, 1) \to (1, \delta_p^\mathrm{new})$, $\delta_r<1$, $\delta_p=1$, and
$(\delta_q, 1) \to (1, \delta_s^\mathrm{new})$, $\delta_q<1$, $\delta_s=1$.
\end{proof} 
\begin{proof}[Proof of Theorem~\ref{theorem_fin_convergence}]
Assume that our algorithm does not stop. 
Then there exists a $\tau$-violating pair 
$(p, q)\in I_{\alpha}^\infty$ or $(p, q)\in I_{\delta}^\infty$.
For $\alpha$-pairs by \eqref{Deltas} we have $f_p+f_p^*-(f_q+f_q^*)>\tau$ and for $\delta$-pairs we have $f_p^*-f_q^*>\tau$.
Since $f_i$ and $f_i^*$, $i=1,\ldots,l$, are continuous with respect to all coefficients in ${\bm\alpha}$, 
then by Proposition~\ref{prop_inf_convergence} there are the limits 
$\overline{f}_i=\lim\limits_{k\to\infty}f_i$ and
$\overline{f}_i^*=\lim\limits_{k\to\infty}f_i^*$, and, therefore,
$\overline{f}_p+\overline{f}_p^*-(\overline{f}_q+\overline{f}_q^*) \geq \tau>0$.
According to \ref{lemma_fin_convergence_rs} in Lemma~\ref{lemma_fin_convergence2},
there are pairs $(r, p)$ and $(q, s)$ selected infinitely many times, and, therefore
\begin{equation*}
    \overline{f}_r+\overline{f}_r^* > \overline{f}_p+\overline{f}_p^* > \overline{f}_q+\overline{f}_q^* > \overline{f}_s+\overline{f}_s^*.
\end{equation*}
By these inequalities all examples $r, p, q, s$ are different.
After applying the same logic to the pairs $(r, p)$ and $(q, s)$ we conclude that we need additional examples $r'$ and $s'$ to form pairs $(r', r)$ and $(s, s')$ such that
\begin{multline*}
    \overline{f}_{r'}+\overline{f}_{r'}^* >
    \overline{f}_r+\overline{f}_r^* > \overline{f}_p+\overline{f}_p^* > \\ \overline{f}_q+\overline{f}_q^* > \overline{f}_s+\overline{f}_s^*
    > \overline{f}_{s'}+\overline{f}_{s'}^*.
\end{multline*}
From this it follows that all data examples are different.
Since this counting requires infinite repetition, we need an infinite number of data examples to satisfy $(p, q)\in I_{\alpha}^\infty$, which is impossible.

Similarly, for $(p, q)\in I_{\delta}^\infty$ we have
\begin{equation*}
    \overline{f}_{r'}^* > \overline{f}_r^* > \overline{f}_p^* > \\ 
    \overline{f}_q^* > \overline{f}_s^* > \overline{f}_{s'}^*,
\end{equation*}
and we need infinitely many data examples.
Since of this contradiction, the algorithm stops after a finite number of iterations.
\end{proof}
\end{appendices}
\bibliographystyle{IEEEtran}
\bibliography{literature}
\begin{IEEEbiography}[{\includegraphics[width=1in,height=1.25in,clip,keepaspectratio]{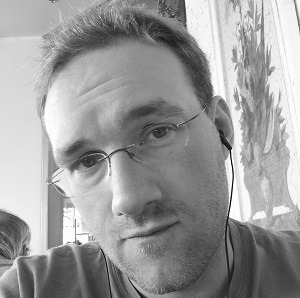}}]{Andrey Lange}
received the M.Sc. degree in applied mathematics and the Ph.D. degree in probability theory and statistics from Bauman Moscow State Technical University (BMSTU), in 2002 and 2007, respectively.
He is currently a Senior Research Engineer with the Skolkovo Institute of Science and Technology (Skoltech) and a Research Fellow with the Federal Research Center ``Computer Science and Control'' of Russian Academy of Sciences (FRC CSC RAS).
His research interests include stochastic processes and machine learning and its applications.
\end{IEEEbiography}
\begin{IEEEbiography}[{\includegraphics[width=1in,height=1.25in,clip,keepaspectratio]{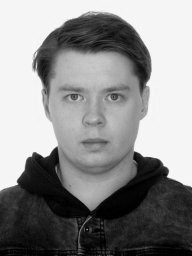}}]{Dmitry Smolyakov}
received the degree from the Moscow Institute of Physics and Technology and the Ph.D. degree from the Skolkovo Institute of Science and Technology (Skoltech).
He is currently a Junior Research with the institute for Information Transmission Problems of Russian Academy of Sciences (ITTP RAS) and a Data Scientist in the Industry.
\end{IEEEbiography}
\begin{IEEEbiography}[{\includegraphics[width=1in,height=1.25in,clip,keepaspectratio]{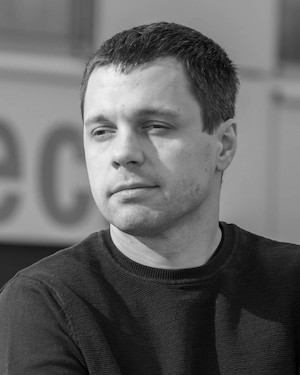}}]{Evgeny Burnaev}
received the M.Sc. degree in applied physics and mathematics from the Moscow Institute of Physics and Technology, in 2006, and the Ph.D. degree in foundations of computer science from the Institute for Information Transmission Problems, Russian Academy of Sciences (RAS), in 2008.
He is currently a Full Professor with the Skolkovo Institute of Science and Technology and the Director of the Skoltech Applied AI Center.
His research interests include regression based on Gaussian processes and kernel methods for multi-fidelity surrogate modeling and optimization, deep learning for 3D data analysis and manifold learning, online sequence learning for prediction, and non-parametric anomaly detection.
\end{IEEEbiography}
\EOD
\end{document}